\documentclass{article}




\usepackage[preprint]{neurips_2019}



\usepackage{amsmath, mathtools}
\usepackage{amsthm}
\usepackage{newtxmath}
\usepackage[T1]{fontenc}
\usepackage[utf8]{inputenc}
\usepackage{bm}
\usepackage{graphicx}
\usepackage{microtype}

\makeatletter
\theoremstyle{plain}
\newtheorem{thm}{\protect\theoremname}
\theoremstyle{plain}
\newtheorem{cor}[thm]{\protect\corollaryname}
\theoremstyle{plain}
\newtheorem{lem}[thm]{\protect\lemmaname}
\theoremstyle{remark}
\newtheorem{rem}[thm]{\protect\remarkname}


\newcommand{\D}{\mathrm{d}}
\newcommand{\E}{\mathrm{e}}
\newcommand{\I}{\mathrm{i}}
\newcommand{\e}{\varepsilon}

\newcommand{\R}{\mathbb{R}}
\newcommand{\Z}{\mathbb{Z}}
\newcommand{\F}{\mathcal{F}}

\newcommand{\uhat}{\hat{u}}
\newcommand*\diff{\mathop{}\!\D}

\newcommand{\abs}[1]{\left\lvert#1\right\rvert} 
\newcommand{\norm}[1]{\left\lVert#1\right\rVert}

\usepackage{changepage}





\usepackage{url}
\usepackage{booktabs}
\usepackage{amsfonts}
\usepackage{nicefrac}

\usepackage{caption}
\usepackage{xcolor}

%

\author{
Yaoyu Zhang \\
New York University Abu Dhabi  and Courant Institute of Mathematical Sciences\\
\And 
Zhi-Qin John Xu\thanks{Corresponding author: zhiqinxu@nyu.edu}\\
New York University Abu Dhabi and Courant Institute of Mathematical Sciences \\
\And
  Tao Luo\\
  Department of Mathematics, Purdue University\\
\And
  Zheng Ma\\
  Department of Mathematics, Purdue University\\
}

\providecommand{\theoremname}{Theorem}

\providecommand{\corollaryname}{Corollary}
\providecommand{\lemmaname}{Lemma}
\providecommand{\remarkname}{Remark}
\providecommand{\theoremname}{Theorem}

\@ifundefined{showcaptionsetup}{}{%
 \PassOptionsToPackage{caption=false}{subfig}}
\usepackage{subfig}
\makeatother

\providecommand{\corollaryname}{Corollary}
\providecommand{\lemmaname}{Lemma}
\providecommand{\remarkname}{Remark}
\providecommand{\theoremname}{Theorem}

\begin{document}

\title{Explicitizing an Implicit Bias of the Frequency Principle in Two-layer
Neural Networks}

\maketitle
\begin{abstract}
It remains a puzzle that why deep neural networks (DNNs), with more
parameters than samples, often generalize well. An attempt of understanding
this puzzle is to discover implicit biases underlying the training
process of DNNs, such as the Frequency Principle (F-Principle), i.e.,
DNNs often fit target functions from low to high frequencies. Inspired by the F-Principle, we propose an effective model of linear F-Principle (LFP) dynamics which accurately predicts the learning results of two-layer ReLU neural networks (NNs) of large widths. This LFP dynamics is rationalized by a linearized mean field residual dynamics of NNs. Importantly, the long-time limit solution of this LFP dynamics is equivalent to the
solution of a constrained optimization problem explicitly minimizing an FP-norm, in which higher frequencies of feasible solutions
are more heavily penalized. Using this optimization formulation, an \emph{a priori} estimate of the generalization error bound is provided, revealing that a higher FP-norm of the target function increases the generalization error. Overall, by explicitizing the implicit
bias of the F-Principle  as an explicit penalty for two-layer NNs, our work makes a step towards a quantitative understanding of the learning and generalization of general DNNs. 
\end{abstract}

\section{Introduction\label{sec:Introduction}}

The wide success of deep learning in many fields \citep{lecun2015deep}
remains a mystery. For example, a puzzle recently attracts a lot of
attention, that is, why Deep Neural Networks (DNNs), with more parameters
than samples, often generalize well \citep{zhang2016understanding}.
A major difficulty of resolving this puzzle may be attributed to the
lack of an effective model which can accurately predict the final
output function of DNNs and yet is simple enough for analysis. Devising
such an effective model could propel deep learning into a new era
in which quantitative understandings of deep learning replace the
qualitative or empirical ones.

Towards this end, we begin with a widely observed phenomenon of DNNs,
that is, Frequency Principle (F-Principle) \citep{xu_training_2018,rahaman2018spectral,xu2019frequency}:

\begin{changemargin}{0.5cm}{0.5cm}\emph{DNNs initialized with
small parameters often fit target functions from low to high frequencies
during the training.} \end{changemargin}

Without an explicit mathematical description, it is unclear how this
implicit bias of the F-Principle functions quantitatively during the
training. Inspired by the F-Principle, we construct a model of linear F-Principle
(LFP) dynamics, which explicitly imposes different priorities on different
frequencies in the gradient flow dynamics. Experimentally, we show
that the LFP model can accurately predict the output of two-layer
ReLU neural networks (NNs) of large widths. We then rationalize
the LFP model using a linearized mean field residual dynamics of DNNs, which is widely considered in recent theoretical studies of
DNNs \citep{mei2018mean,rotskoff2018parameters,mei2019mean}. We prove that the long-time limit solution
of this LFP dynamics is equivalent to the solution of a constrained
optimization problem minimizing an F-Principle norm (FP-norm), in
which higher frequencies of feasible solutions are more heavily penalized.
Therefore, by analyzing the explicit regularity
underlying the FP-norm, we can obtain a quantitative understanding
of the behavior of two-layer NNs. 

With a reasonable construction process and an ability of making accurate
predictions for two-layer ReLU NNs of large widths, the
LFP model qualifies as a primitive candidate of an effective model
of DNNs, which is capable of providing quantitative understandings
of deep learning. To analyze the generalization error of the LFP model, we first use the FP-norm, the explicit penalty, to induce an FP function space, and estimate its Rademacher complexity. We then provide an \emph{a priori} estimate, i.e., an estimate
without the knowledge of the model solution, of the generalization error
for the LFP model, which is bounded by the FP-norm of the target function,
scales as $1/\sqrt{M}$ as the number of training samples $M$ increases,
and is independent of the number of parameters in NNs. 

\section{Related works}

Various approaches have been proposed in an attempt to resolve the
generalization puzzle of DNNs. For example, the generalization error
has been related to various complexity measures (\citealp{bartlett1999almost,bartlett2002rademacher,bartlett2017spectrally,bartlett2017nearly,neyshabur2017exploring,golowich2017size,dziugaite2017computing,neyshabur2018towards,ma2018priori}),
local properties (sharpness/flatness) of loss functions at minima
(\citealp{hochreiter1995simplifying,keskar2016large,dinh2017sharp,wu2017towards}),
stability of optimization algorithms (\citealp{bousquet2002stability,xu2012robustness,hardt2015train}),
and implicit bias of the training process (\citet{arpit2017closer,rahaman2018spectral,xu2018understanding,xu2018frequency,perez2018deep,xu2019frequency,xu_training_2018,neyshabur2014search,poggio2018theory,soudry2018implicit}).
Recently, analyzing DNNs in an extremely over-parameterized regime renders a promising approach.
For example, the training process of two-layer neural networks at
the mean-field limit can be described by a partial differential equation
\citep{rotskoff2018parameters,mei2018mean,sirignano2018mean}. In
addition, the training dynamics of a DNN in an extremely over-parameterized
regime is found to be well approximated by the gradient flow of a
linearized model of the DNN resembling kernel methods \citep{jacot2018neural,lee2019wide}. This result initiates a series of works. For example, \cite{arora2019fine,cao2019generalization,E2019analysis,E2019comparative}
utilize the linearized model to study the generalization error bounds
of DNN. Note that an \emph{a priori} generalization error bound for
two-layer NNs is provided in \citet{ma2018priori}, in which an explicit
penalty is imposed to the loss function of NNs. In contrast, our \emph{a
prior} generalization bound works for NNs without any extra penalty. 

\section{Notation and experimental setup}

\subsection{Notation}

For a two-layer neural network, its output (also known as the hypothesis function) reads as
\begin{equation}
    h(x, \theta) = \sum_{i=1}^N w_i \sigma\left(r_i\cdot x - \abs{r_i}l_i\right), \label{eq:relunn2}
\end{equation}
where $r_i, x \in \R^d$, $\theta = (w,R,l)$, $w,l\in\R^{N}$
and $R\in\R^{N\times d}$, and by default
$\sigma(x)=\max(x,0)$ is the activation function of ReLU. The target
function is denoted by $f(x)$. 
In this work, we consider the mean-squared error (MSE) loss function 
\begin{equation}
  L=\int_{\R^d}\frac{1}{2}\abs{h(x,\theta)-f(x)}^{2}p(x) \diff{x}, \label{eq:mseloss}
\end{equation}
where $p(x)$ is the population probability density. 
The following notation will be used in studying the training dynamics:
$u(x,t)= h(x,\theta(t))-f(x)$, $u_{p}(x,t)= h_{p}(x,\theta(t))-f_{p}(x)$,
where $h_{p}(x,\theta(t))=h(x,\theta(t))p(x)$, $f_{p}(x)=f(x)p(x)$. In this work, we focus on $p(x)=\frac{1}{M}\sum_{i=1}^M\delta(x-x_i)$, which accounts for the real case of a finite training dataset $\{x_i;y_i\}_{i=1}^M$ with each input $x_i\in\R^d$ and output $y_i\in\R$. Because the target function $f(x)$ is not available, without loss of generality, we fixed it to any continuous function satisfying $f(x_i)=y_i$ for $i=1,\cdots,M$. Then $f_{p}(x)=\frac{1}{M}\sum_{i=1}^My_i\delta(x-x_i)$. Because $\partial_{t}f(x)=0$ for any fixed $f(x)$, its choice does not affect the training dynamics of $h(x,\theta(t))$.

For any function $g$ defined on $\R^n$, $n\in \Z^+$, we use the following convention of the Fourier transform and its inverse:
\begin{equation*}
\textstyle{  
\mathcal{F}[g](\xi)=\hat{g}(\xi)=\int_{\R^n}g(x)\E^{-2\pi\I \xi\cdot x}\diff{x},\quad
  g(x)=\int_{\R^n}\hat{g}(\xi)\E^{2\pi\I x\cdot \xi}\diff{\xi},
}
\end{equation*}
where $\xi\in\R^{n}$ denotes the frequency. For any function $g$ defined on the torus $\Omega:=\mathbb{T}^n=[0,1]^n$, $n\in \Z^+$, we use the following convention of the Fourier series and its inverse:
\begin{equation*}
\textstyle{
\hat{g}(k)=\int_{\mathbb{T}^n}g(x)\E^{-2\pi\I k\cdot x}\diff{x},\quad
  g(x)=\sum_{k\in\Z^n}\hat{g}(\xi)\E^{2\pi\I x\cdot k},
}
\end{equation*}
where $k\in\Z^{n}$ denotes the frequency. For $u$ and $u_p$, their Fourier transforms are written as $\hat{u}(\xi,t)$
and $\widehat{u_{p}}(\xi,t)$, respectively. $\theta(t)=(w(t),R(t),l(t))$
are the parameters at training time $t$. 

\subsection{Experimental setup}

In our experiments, we use two-layer ReLU NNs of form $h(x,\theta)=\sum_{i=1}^{N}w_{i}\sigma(r_{i}\cdot x-|r_{i}|l_{i})$ for input dimension $d>1$ and $h(x,\theta)=\sum_{i=1}^{N}w_{i}\sigma(r_{i}(x-l_{i}))$ for $d=1$. The NNs are trained
with MSE loss and full batch size. The learning rate for Fig. \ref{fig:2relu}
and \ref{fig:2relu-1} is $3\times10^{-5}$, for Fig. \ref{fig:fpnorm}
is $10^{-4}$. The training algorithm for Fig. \ref{fig:2relu} and
\ref{fig:2relu-1} is gradient descent, for Fig. \ref{fig:fpnorm}
is Adam \citep{kingma2014adam}. $l_{i}$'s
are initialized by a uniform distribution on $[-1,1]$ for Fig. \ref{fig:2relu} and on $[-4,4]$ for Fig. \ref{fig:2relu-1}. For Fig. \ref{fig:2relu-1}, $w_{i}$'s and $r_{i}$'s are initialized
by ${\cal N}(0,1)$ and ${\cal N}(0,0.49)$. For Fig. \ref{fig:fpnorm}, We use an NN of $5000$ hidden neurons initialized
by Xavier normal initialization. 

A random non-zero initial output of DNN leads to a specific type of generalization error.
To eliminate this error, we use DNNs with an antisymmetrical initialization (ASI) trick\citep{zhang_type_2019}.

\section{An effective model of Linear F-Principle (LFP) dynamics}

It is difficult to analyze DNNs theoretically due to its huge number of parameters
and highly non-linear dynamics. In this section, inspired by the F-Principle,
we propose a Linear F-Principle (LFP) dynamics to effectively model
a two-layer ReLU NN of a large width. Specifically, with the loss
of MSE, up to a multiplicative constant in time scale, we model the gradient descent dynamics
of the two-layer NN of a sufficiently large width $N$ as
\begin{equation}
\partial_{t}\hat{u}(\xi,t)=-\left(\frac{\frac{1}{N}\sum_{i=1}^{N}\left(|r_{i}(0)|^{2}+w_{i}(0)^{2}\right)}{|\xi|^{d+3}}+\frac{4\pi^{2}\frac{1}{N}\sum_{i=1}^{N}\left(|r_{i}(0)|^{2}w_{i}(0)^{2}\right)}{|\xi|^{d+1}}\right)\widehat{u_{p}}(\xi),\label{eq:relunnFP}
\end{equation}
where, different from $\hat{u}(\xi,t)$ in the left hand side (LHS),
$\widehat{u_{p}}(\xi,t)=\mathcal{F}(u(\cdot,t)p(\cdot))=\mathcal{F}\left[\sum_{i=1}^{M}\left(h(x_{i},\theta(t))-y_{i}\right)\delta(\cdot-x_{i})\right]$
in the right hand side (RHS) incorportates the information from the
training dataset. 
In our numerical experiments, we only consider NNs with ASI trick\citep{zhang_type_2019}, which guarantees $h(\cdot,\theta_{0})=0$. Note that, for $d=1$, an NN of the form $h(x,\theta)=\sum_{i=1}^{N}w_{i}\sigma(r_{i}(x-l_{i}))$
is modeled by the same LFP dynamics \eqref{eq:relunnFP}. 
For convenience, we refer to the
long-time limit solution of the LFP dynamics as the solution of the
LFP model. Intuitively, the coefficient as a function of $\xi$ in
the RHS characterizes a decaying priority of convergence for $\hat{u}(\xi,t)$
from low to high frequencies, conforming with the phenomenon of the
F-Principle \citep{xu_training_2018,xu2019frequency}.  

Intuitively, a higher order of decay in the frequency domain, say
$1/|\xi|^{d+3}$ comparing to $1/|\xi|^{d+1}$, leads to a more ``smooth''
solution. Therefore, adjusting the relative importance of $1/|\xi|^{d+3}$
and $1/|\xi|^{d+1}$ through their coefficients $\frac{1}{N}\sum_{i=1}^{N}\left(|r_{i}(0)|^{2}+w_{i}(0)^{2}\right)$
and $4\pi^{2}\frac{1}{N}\sum_{i=1}^{N}\left(|r_{i}(0)|^{2}w_{i}(0)^{2}\right)$,
we can obtain solutions of different regularity/smoothness for a given
training dataset.

Before we rationalize this model, we first demonstrate the effectiveness
of this model through experiments on the synthetic training data of
1-d and 2-d input. Note that the long-time limit solution of dynamics (\ref{eq:relunnFP}) is obtained by solving an equivalent optimization
problem numerically (see Section \ref{sec:Explicitizing-the-implicit}
and Appendix \ref{sec:Numerical-solution-of} for details.).

For the case of 1-d input, i.e., $d=1$, we consider a training dataset
of $12$ samples as shown in Fig. \ref{fig:2relu}. We first initialize
$w_{i}$'s and $r_{i}$'s by uniform distributions on $[-0.1,0.1]$
and $[-0.25,0.25]$, respectively, such that $1/|\xi|^{4}$ dominates
in Eq. \eqref{eq:relunnFP}. As shown in Fig. \ref{fig:2relu}(a),
for the two-layer ReLU NN of $500$ hidden neurons, the solution
of the corresponding LFP model well approximates the final output
of the NN. As we increase the number of hidden neurons to $16000$,
as shown in Fig. \ref{fig:2relu}(b), the approximation becomes almost
perfect. We use $L^{p}$ norm to quantify the approximation error
at testing points $\{x_{i}\}_{i=1}^{M_{\mathrm{test}}}$, i.e., $L^{p}(h_{N},h_{\mathrm{LFP}})=\left(\sum_{i=1}^{M_{{\rm test}}}|h_{N}(x_{i})-h_{\mathrm{LFP}}(x_{i})|^{p}\right)^{1/p},$ where
$h_{N}$ is the final output of the two-layer ReLU NN of $N$ hidden
neurons, $h_{\mathrm{LFP}}$ is the solution of the corresponding
LFP model. As shown in Fig. \ref{fig:2relu}(c), $L^{1}(h_{N},h_{\infty})$
and $L^{2}(h_{N},h_{\infty})$ decrease as $N$ increases, indicating
that our LFP model accurately models the two-layer NNs of sufficiently
large widths. To obtain a less smooth interpolation of the training
data, we initialize $w_{i}$'s and $r_{i}$'s with uniform distributions
on $[-2,2]$, such that $1/|\xi|^{2}$ dominates in Eq. \eqref{eq:relunnFP}.
As shown in Fig. \ref{fig:2relu}(d), the solution of the LFP model
is less smooth compared to that in Fig. \ref{fig:2relu}(a) and (b).
In fact, it is close to a piecewise linear function. We note that
the LFP model with only the decaying term of $1/|\xi|^{2}$ in the
RHS of Eq. \eqref{eq:relunnFP} performs the piecewise linear interpolation
for $d=1$. We will elaborate this result in our future works. In
this example, the solution of our LFP model almost perfectly
overlaps with the final output of the NN.

For the case of 2-d input, i.e., $d=2$, we consider the training
dataset of the famous XOR problem, which cannot be solved by one-layer
neural networks. This training dataset consists of four points represented
by white stars in Fig. \ref{fig:2relu-1}(a). As shown in Fig. \ref{fig:2relu-1}(a-c),
our LFP model predicts the final output of NN very accurately over
the domain $[-1,1]^{2}$. Similar to the 1-d case, as the number
of hidden neurons increases, the prediction by the LFP model
becomes more accurate. We also present similar experimental results
for an asymmetrical training dataset in Fig. \ref{fig:2relu-1-1}
in Appendix. 

\begin{center}
\begin{figure}
\begin{centering}
\subfloat[]{\begin{centering}
\includegraphics[scale=0.2]{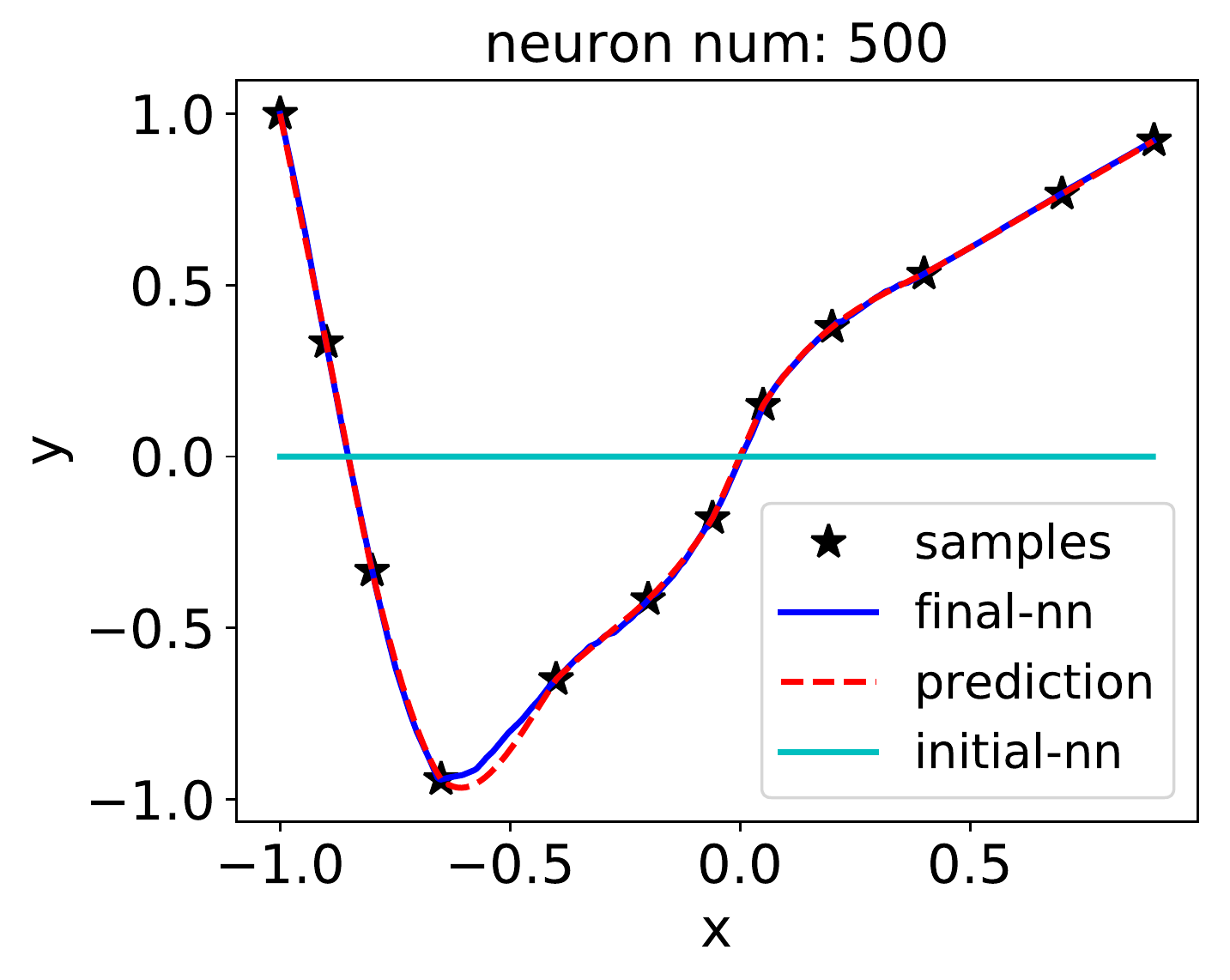} 
\par\end{centering}
}\subfloat[]{\begin{centering}
\includegraphics[scale=0.2]{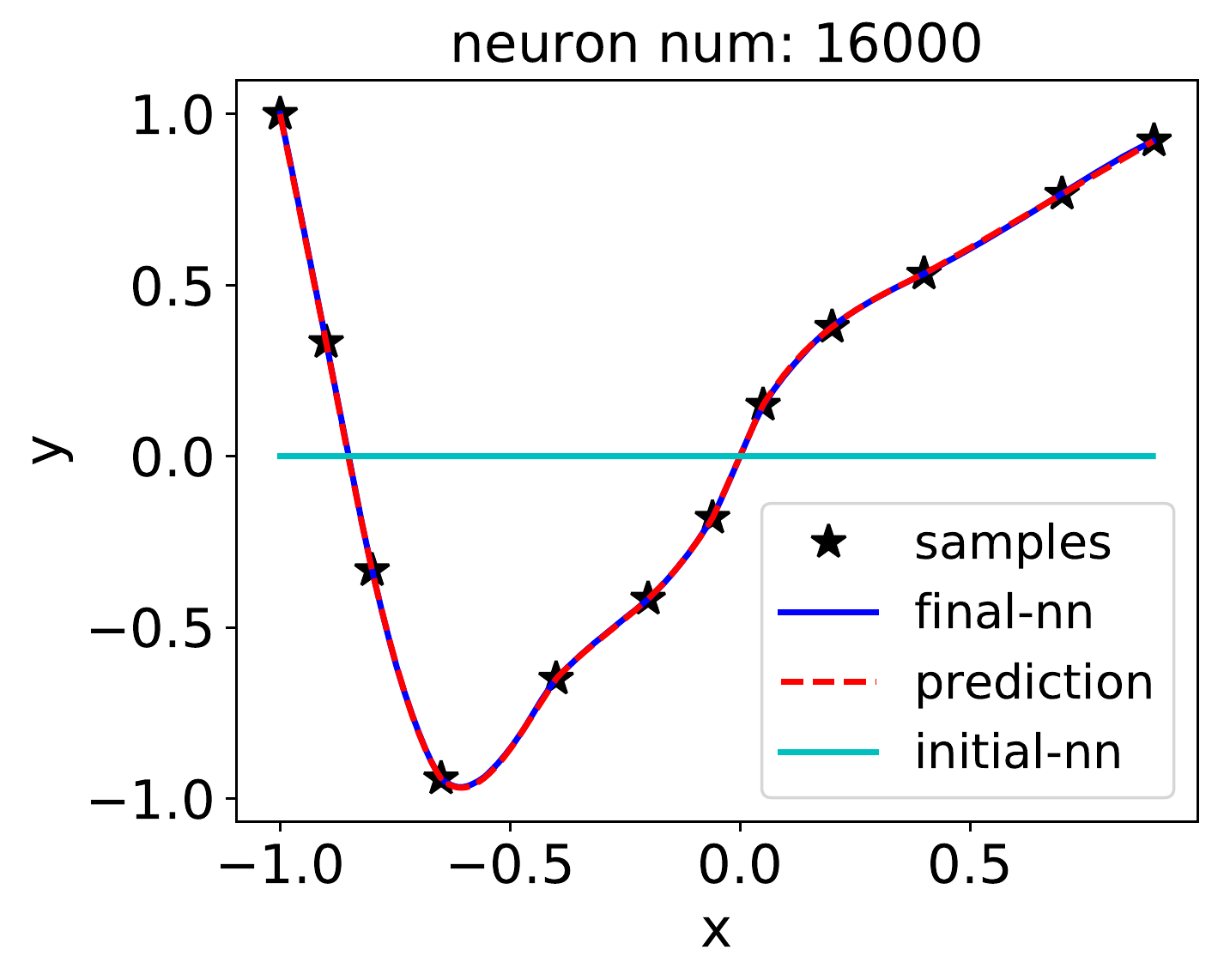} 
\par\end{centering}
}\subfloat[]{\begin{centering}
\includegraphics[scale=0.23]{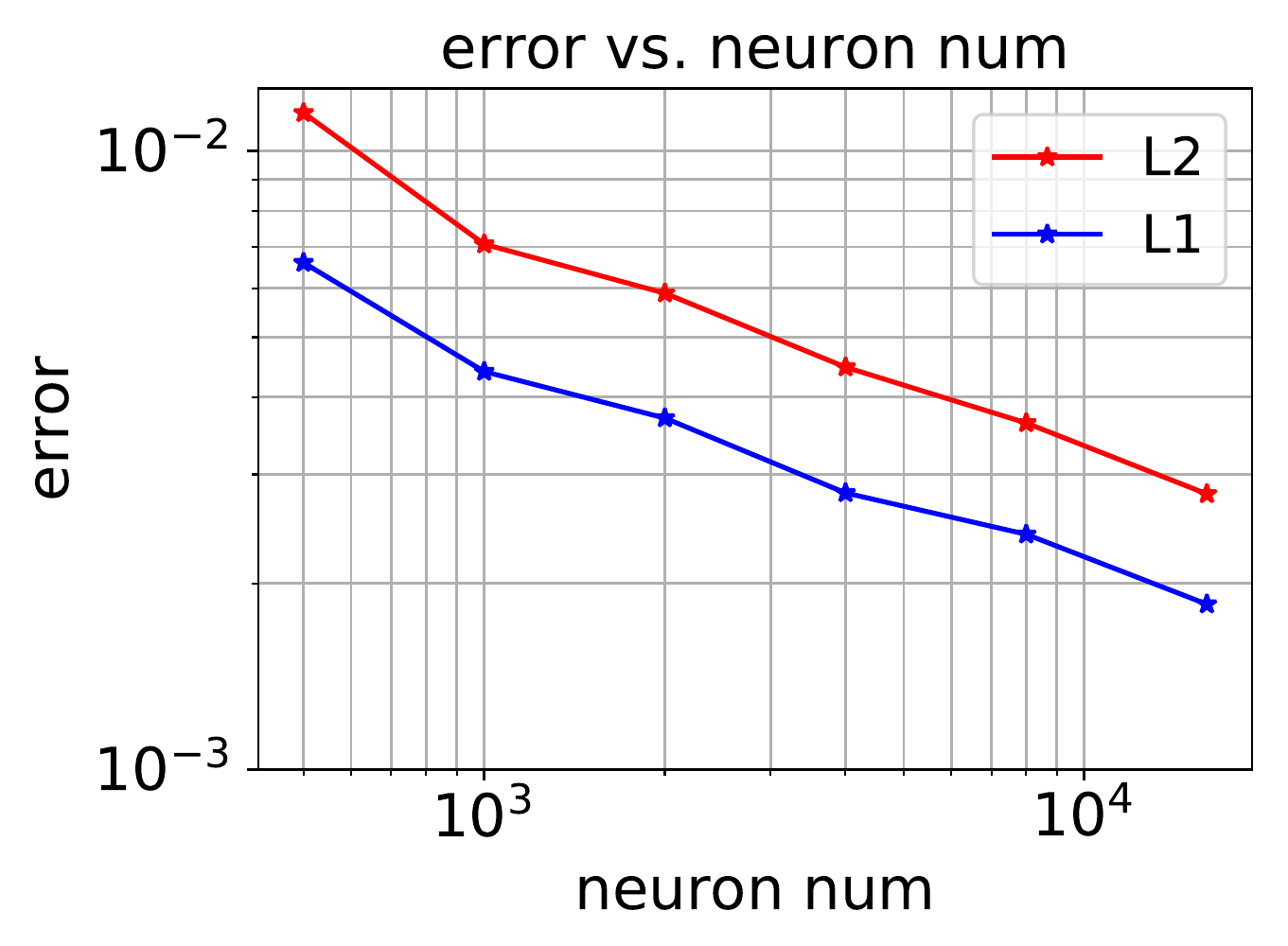} 
\par\end{centering}
}\subfloat[]{\begin{centering}
\includegraphics[scale=0.23]{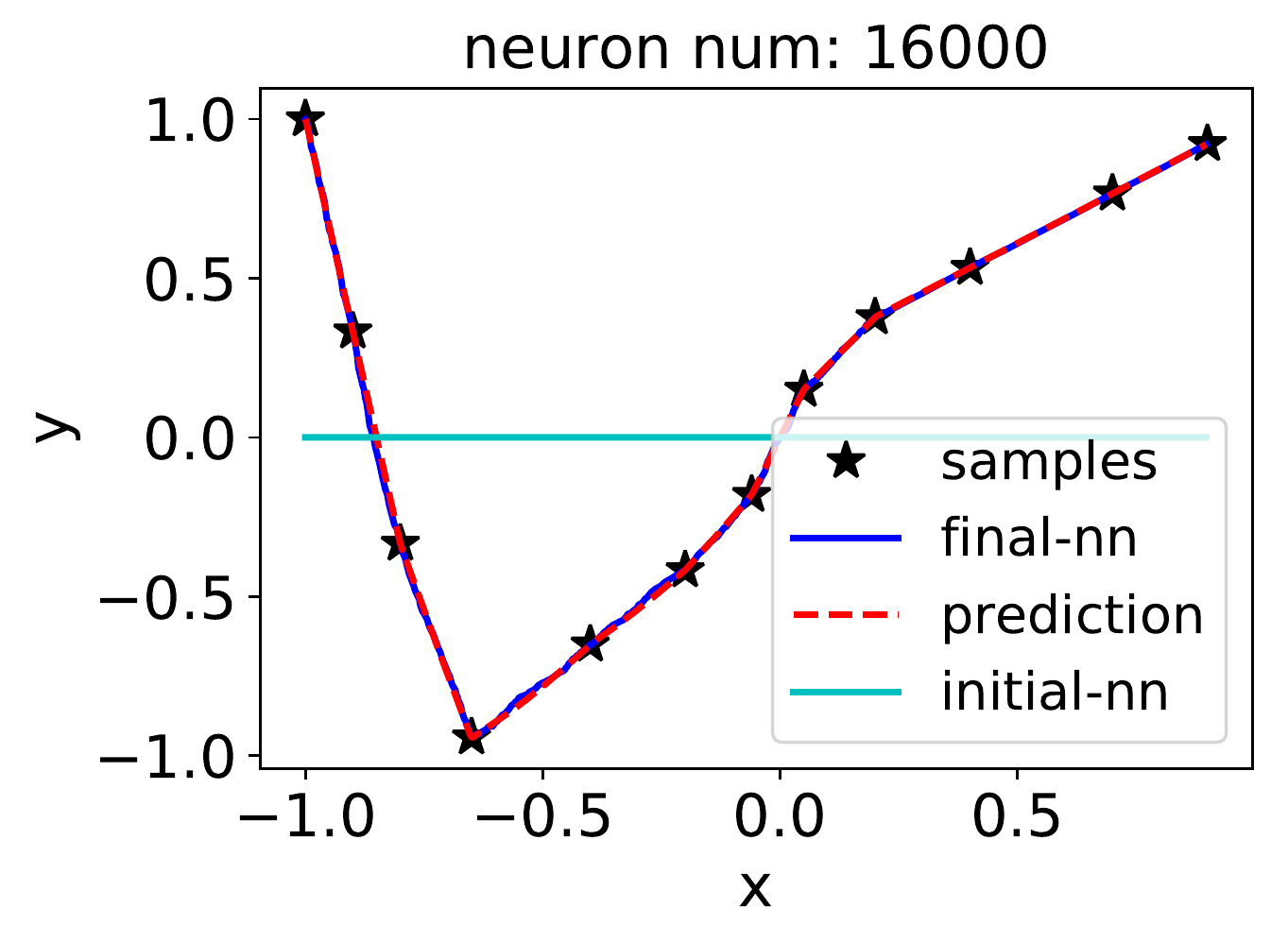} 
\par\end{centering}
}
\par\end{centering}
\caption{LFP model for 1-d training data. For (a, b, c), $1/|\xi|^{4}$ dominates. For (d), $1/|\xi|^{2}$ dominates. For (a, b, d), black stars: training samples; blue solid line: $800$ uniformly spaced samples; red dashed lines: solutions of the corresponding LFP models; cyan solid curves: zero initial outputs of NNs. (c) $L^{1}(h_{N},h_{\mathrm{LFP}})$
and $L^{2}(h_{N},h_{\mathrm{LFP}})$ (mean of 10 trials) vs. neuron number.\label{fig:2relu} }
\end{figure}
\par\end{center}

\begin{center}
\begin{figure}
\begin{centering}
\subfloat[]{\begin{centering}
\includegraphics[scale=0.23]{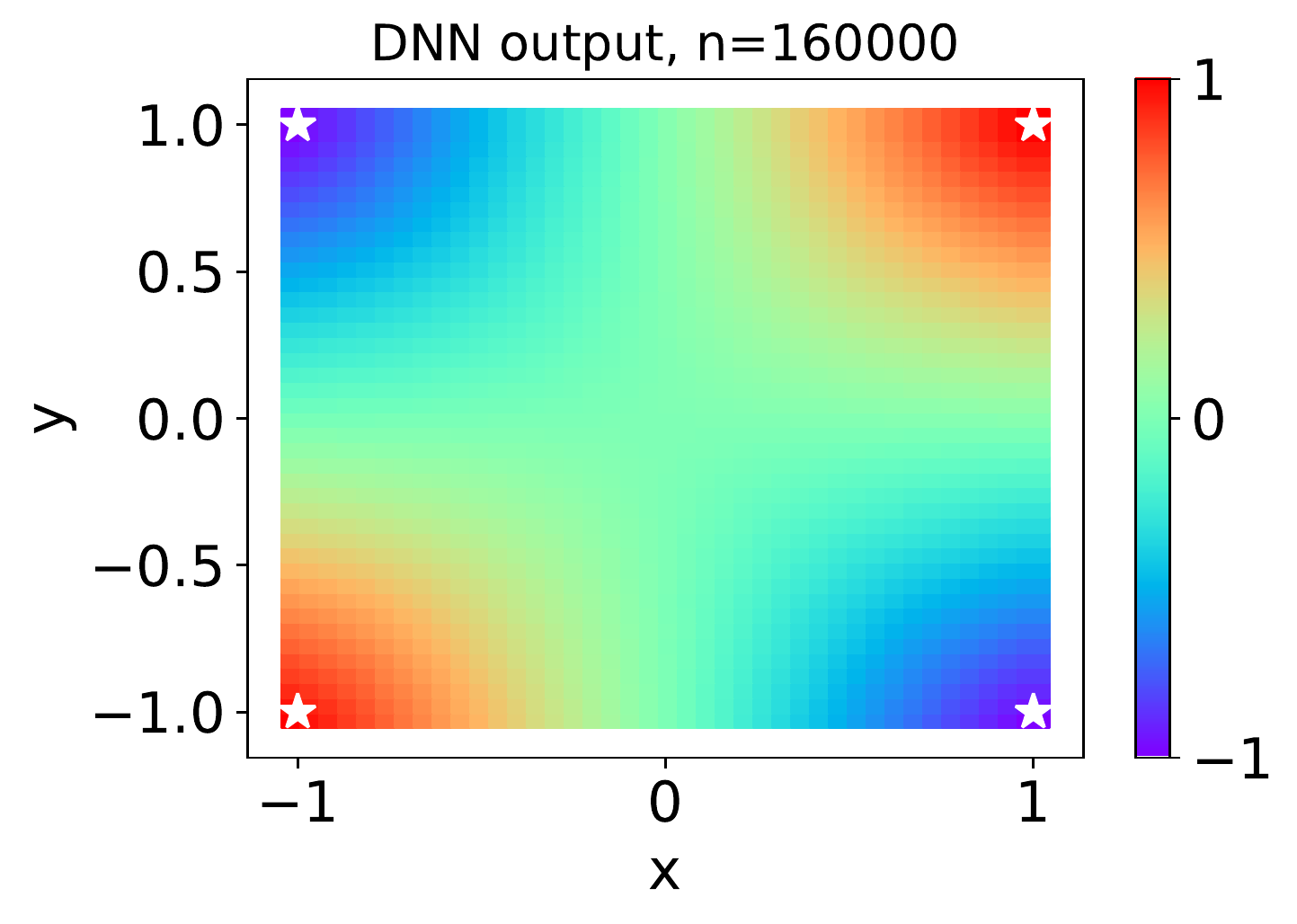} 
\par\end{centering}
}\subfloat[]{\begin{centering}
\includegraphics[scale=0.23]{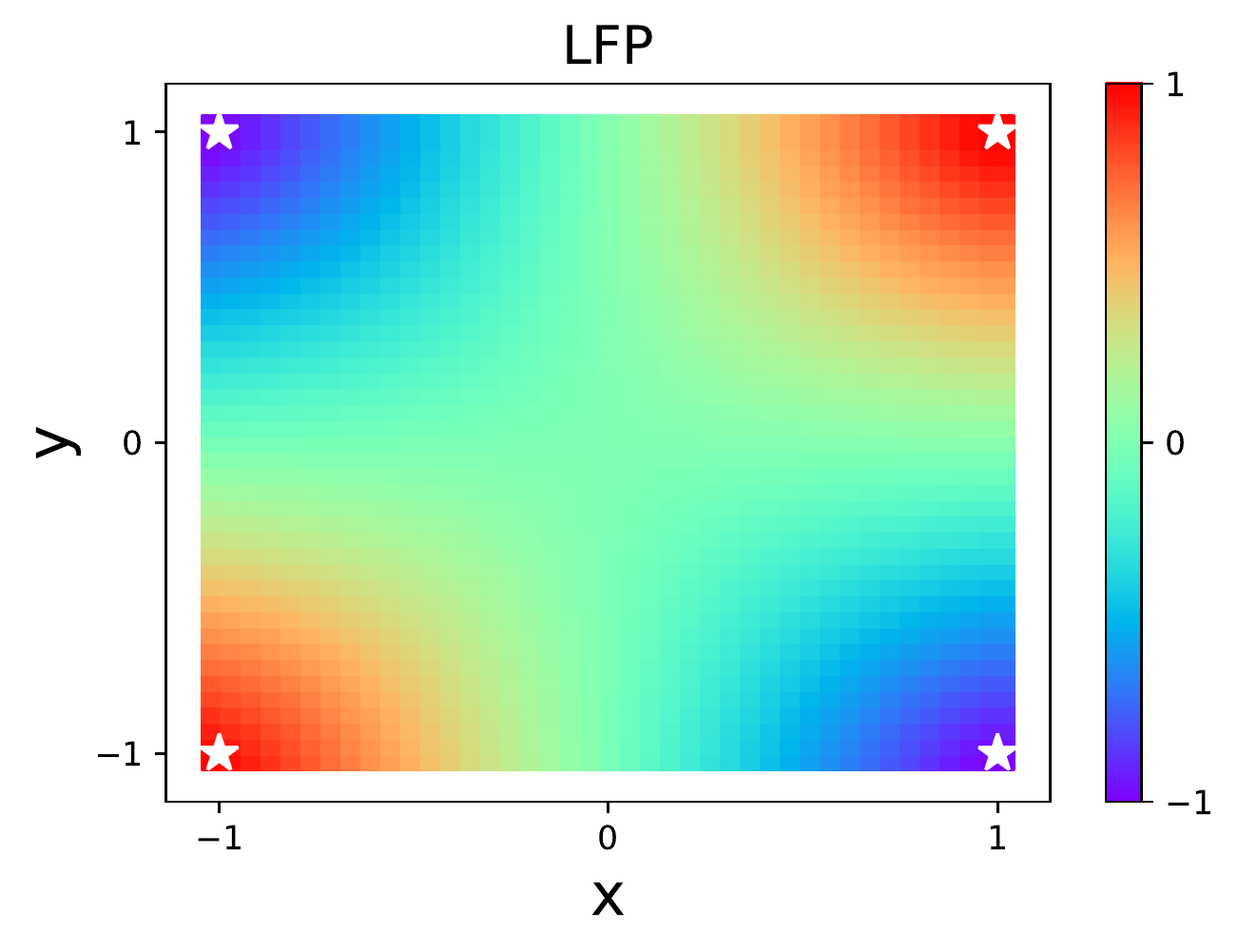} 
\par\end{centering}
}\subfloat[]{\begin{centering}
\includegraphics[scale=0.23]{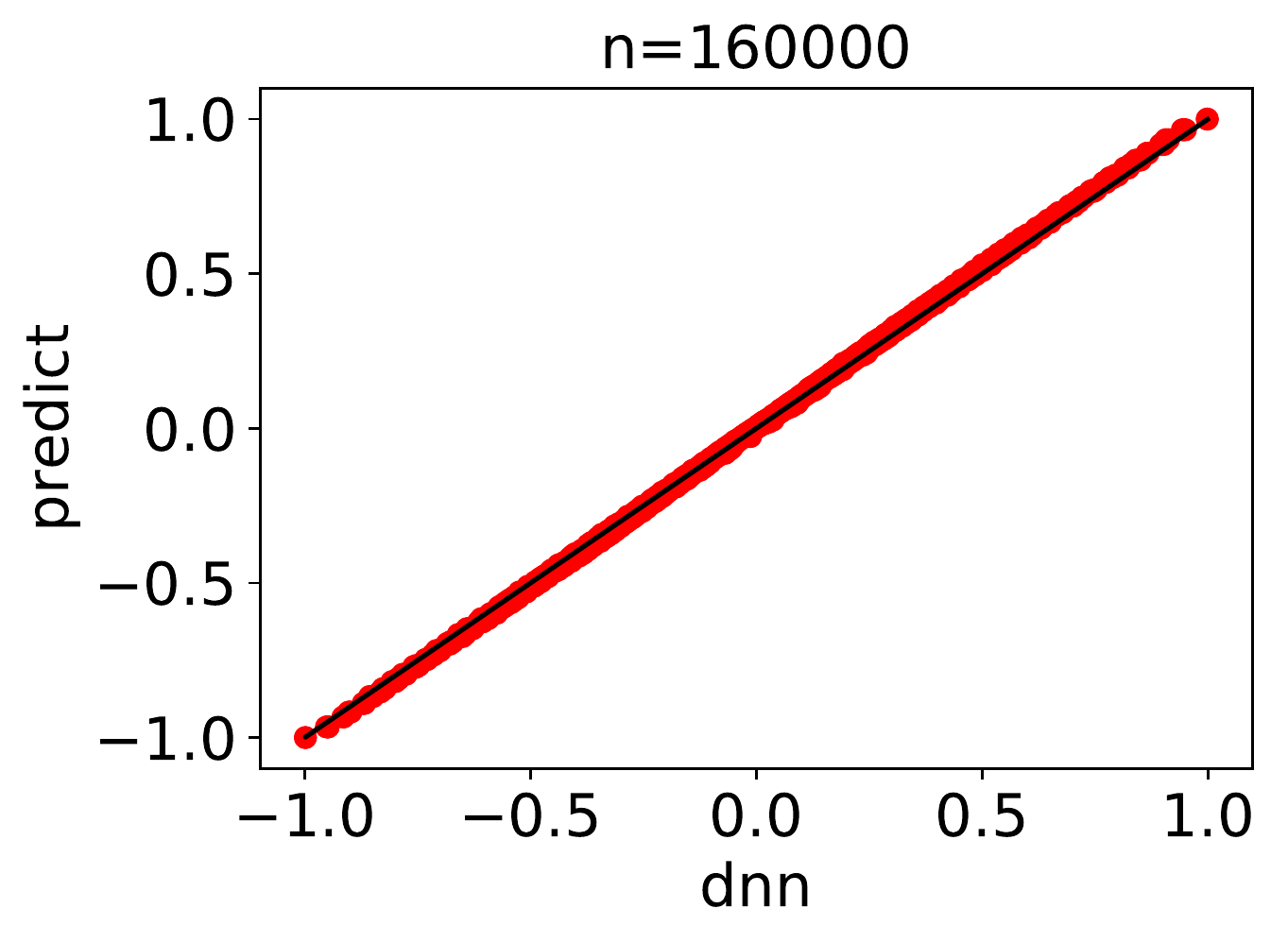} 
\par\end{centering}
}\subfloat[]{\begin{centering}
\includegraphics[scale=0.23]{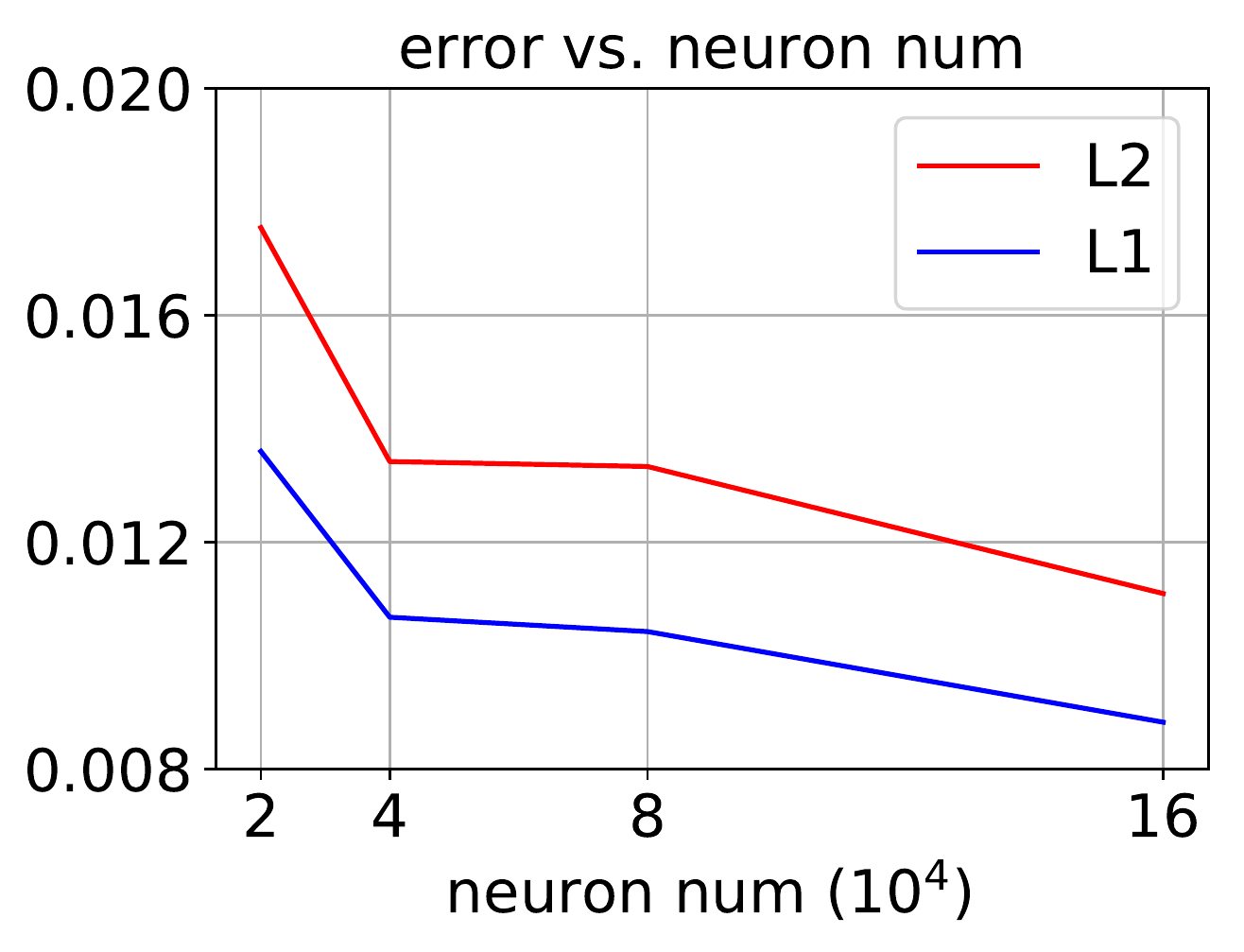} 
\par\end{centering}
}
\par\end{centering}
\caption{LFP model for 2-d training data of the XOR problem. (a) The final output
of the NN. (b) The solution of the corresponding LFP model. The training data are marked by white stars. (c) Each dot represents the final output of NN (abscissa)
vs. solution of the LFP model (ordinate) evaluated at one of the
$1600$ evenly spaced test points. The black line
indicates the identity function. (d) Decay of $L^{1}(h_{N},h_{\mathrm{LFP}})$
and $L^{2}(h_{N},h_{\mathrm{LFP}})$ (mean of 10 trials) vs. neuron number. \label{fig:2relu-1} }
\end{figure}
\par\end{center}

\subsection{Rationalization of the LFP dynamics}
Our starting point is the following \emph{linearized mean field residual dynamics}~(\cite{mei2018mean,mei2019mean}):
\begin{equation}
\textstyle{  \partial_t u(x, t) = -\int_{\R^d} K_{\theta_0}(x, z)u_p(z, t)\diff{z} \label{kernelds},
}\end{equation}
where $\theta_0$ denotes the initial parameters of the NN in the mean field kernel limit. The kernel is defined as
\begin{equation}\label{linear-kernelMain}
\textstyle{  K_{\theta_0}(x, z) = \int_{\R^{d+2}} \left[\nabla_{\theta}\sigma^*(x, \theta_0) \cdot \nabla_{\theta}\sigma^*(z, \theta_0)\right]\rho(w_0, r_0, l_0)\diff{w_0}\diff{r_0}\diff{l_0},
}\end{equation}
where $\sigma^*(\cdot, \theta_0) = w_0\sigma\left(r_0\cdot[\cdot] - \abs{r_0}l_0)\right)$, $\sigma$ is the ReLU function.
By applying the Fourier transform with respect to $x$ to both sides of Eq. (\ref{kernelds}), we can approximately derive the following frequency domain dynamics up to a time constant (see Appendix \ref{rationalization} for details), that is
\begin{equation}
 \partial_t\uhat(\xi, t) = -\left\langle\frac{\abs{r_0}^2+w_0^2}{\abs{\xi}^{d+3}} + \frac{4\pi^2\abs{r_0}^2 w_0^2}{\abs{\xi}^{d+1}}\right\rangle_{w_0, \abs{r_0}}\widehat{u_p}(\xi, t),
\end{equation}
where $\langle\cdot\rangle_{w_0, \abs{r_0}} = \int_{\R\times\R^+}\cdot \rho(w_0)\rho(\abs{r_0})\diff{w_0}\diff{\abs{r_0}}.$ Then Eq. (\ref{eq:relunnFP}) can be obtained by replacing $\langle\cdot\rangle_{w_0, \abs{r_0}}$ with the mean over $N$ hidden neurons.

\section{Explicitizing the implicit bias of the F-Principle \label{sec:Explicitizing-the-implicit}}

\subsection{An equivalent optimization problem to the gradient flow dynamics}

In our LFP model, the solution is implicitly regularized by a decaying coefficient for different frequencies of $\hat{u}$ throughout the training. For a quantitative analysis of this solution, we explicitize such an implicit dynamical regularization by a constrained optimization problem as follows.

First, we present a general theorem that the long-time limit solution of a gradient
flow dynamics is equivalent to the solution of a constrained optimization
problem. All proofs are in Appendix \ref{Sec:equiv}.

Let $H_1$ and $H_2$ be two seperable Hilbert spaces and $P: H_1\rightarrow H_2$ is a surjective linear operator, i.e., $\mathrm{Im} (P)=H_2$. Let $P^*: H_2\rightarrow H_1$ be the adjoint operator of $P$, defined by 
\begin{equation}
  \langle P u_1, u_2\rangle_{H_2}=\langle u_1, P^* u_2\rangle_{H_1},\quad\text{for all}\quad u_1\in H_1, u_2\in H_2.
\end{equation}
 
Given $g\in H_2$, we consider the following two problems.

(i)
The initial value problem
\begin{equation*}
\dfrac{\D u}{\D t}=P^*(g-Pu),\quad u(0)=u_{\rm ini}.
\end{equation*}
Since this equation is linear and with nonpositive eigenvalues on the right hand side, there exists a unique global-in-time solution $u(t)$ for all $t\in[0,+\infty)$ satisfying the initial condition. Moreover, the long-time limit $\lim_{t\rightarrow+\infty}u(t)$ exists and will be denoted as $u_\infty$.

(ii)
The minimization problem
\begin{equation*}
\min_{u -u_{\rm ini}\in H_1}\norm{u-u_{\rm ini}}_{H_1},\quad \text{s.t.}\quad Pu=g.
\end{equation*}
In the following, we will show that it has a unique minimizer which is denoted as $u_{\min}$. Now we present the following theorem of the equivalence relation.
\begin{thm}\label{MTthm..EquivalenceDynamicsMinimization}
  Suppose that $PP^*$ is surjective. The above Problems (i) and (ii) are equivalent in the sense that $u_\infty=u_{\min}$. 
  More precisely, we have
  \begin{equation}
    u_\infty=u_{\min}=P^*(PP^*)^{-1}(g-Pu_{\rm ini})+u_{\rm ini}.
  \end{equation}
\end{thm}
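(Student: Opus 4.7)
The plan is to solve the two problems independently and verify they produce the same vector. The hinge is that $PP^*$ is a positive bounded self-adjoint operator on $H_2$ which, by hypothesis, is surjective; by the open mapping theorem it is then a bounded bijection with bounded inverse, and its spectrum is contained in some interval $[c,C]$ with $c>0$. This uniform positivity is what makes both the projection argument for (ii) and the $t\to\infty$ limit in (i) go through.

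For problem (ii), surjectivity of $P$ makes $\{u:Pu=g\}$ a non-empty closed affine subspace of $H_1$, so Hilbert-space orthogonal projection of $u_{\rm ini}$ onto it yields a unique minimizer $u_{\min}$ characterized by $u_{\min}-u_{\rm ini}\in(\ker P)^\perp$. From $PP^*\ge cI$ one gets $\|P^*v\|^2=\langle PP^* v, v\rangle\ge c\|v\|^2$, so $P^*$ is bounded below; hence $\mathrm{Im}(P^*)$ is closed and coincides with $(\ker P)^\perp$. Writing $u_{\min}-u_{\rm ini}=P^*v$ and enforcing $Pu_{\min}=g$ gives $PP^*v=g-Pu_{\rm ini}$, so $v=(PP^*)^{-1}(g-Pu_{\rm ini})$ and we obtain the claimed closed form.

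For problem (i), set $r(t):=g-Pu(t)$, so $\dot r=-PP^*r$ with $r(0)=g-Pu_{\rm ini}$, giving $r(t)=\E^{-tPP^*}r(0)$. Integrating the original ODE yields
\begin{equation*}
u(t)=u_{\rm ini}+P^*\!\int_0^t \E^{-sPP^*}\diff{s}\,(g-Pu_{\rm ini})=u_{\rm ini}+P^*(PP^*)^{-1}\bigl(I-\E^{-tPP^*}\bigr)(g-Pu_{\rm ini}),
\end{equation*}
where the identity $\int_0^t \E^{-sA}\diff{s}=A^{-1}(I-\E^{-tA})$ is applicable because $PP^*$ is invertible. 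Since $\sigma(PP^*)\subset[c,C]$ with $c>0$, the operator norm $\|\E^{-tPP^*}\|\le \E^{-ct}\to 0$, so $u(t)\to u_{\rm ini}+P^*(PP^*)^{-1}(g-Pu_{\rm ini})=u_{\min}$, yielding $u_\infty=u_{\min}$.

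The main substantive obstacle is infinite-dimensional bookkeeping: one must justify that $(PP^*)^{-1}$ is a \emph{bounded} operator (not merely algebraically defined on a range) and that the spectrum of $PP^*$ is uniformly separated from zero. Both facts follow cleanly from $PP^*$ being a bounded self-adjoint surjection via the open mapping theorem; under these, the closedness of $\mathrm{Im}(P^*)$ used in (ii) and the exponential decay of the semigroup used in (i) become rigorous. If $P$ were unbounded or only densely defined, one would need a more delicate spectral-theoretic setup, but that is outside the scope of the theorem as stated.
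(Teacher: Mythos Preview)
Your proof is correct and follows the same overall architecture as the paper's---reduce to showing both problems yield the closed form $P^*(PP^*)^{-1}(g-Pu_{\rm ini})+u_{\rm ini}$---but the execution differs in two places worth noting.

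For problem (ii), the paper simply \emph{guesses} the candidate $\tilde u_{\min}=P^*(PP^*)^{-1}\tilde g$, checks feasibility, and then verifies the Pythagorean identity $\langle \tilde u-\tilde u_{\min},\tilde u_{\min}\rangle_{H_1}=0$ directly to conclude uniqueness. You instead \emph{derive} the formula from the projection theorem by showing $\mathrm{Im}(P^*)$ is closed (via $PP^*\ge cI$) and hence equals $(\ker P)^\perp$. Both are fine; the paper's route is slightly shorter, yours is more structural and makes clear why the formula must take this form rather than merely confirming it post hoc.

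For problem (i), your semigroup argument is in fact more rigorous than the paper's. The paper writes the solution as $P^*(PP^*)^{-1}\tilde g+\sum_{i\in I}c_iv_i\exp(-\lambda_i t)$, an eigenvector expansion that tacitly assumes $P^*P$ has pure point spectrum with a complete system of eigenvectors---unjustified in general for bounded operators on infinite-dimensional $H_1$. Your approach, pushing the dynamics down to $H_2$ via $r(t)=g-Pu(t)$, solving $\dot r=-PP^*r$ by the semigroup $\E^{-tPP^*}$, and using the spectral bound $\sigma(PP^*)\subset[c,C]$ to get $\|\E^{-tPP^*}\|\le \E^{-ct}\to 0$, avoids this gap entirely. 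This buys you a proof that works without any compactness or diagonalizability assumptions, at the cost of invoking the functional calculus for bounded self-adjoint operators.
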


The following corollary is obtained directly from
Theorem \ref{MTthm..EquivalenceDynamicsMinimization}.

\begin{cor}\label{cor..EquivalencdHWFrequency} Let $\gamma: \R^{d}\rightarrow\R^+$ be a positive function and $h$ be a function in $L^2(\R^{d})$. The operator $\Gamma: L^2(\R^{d})\rightarrow L^2(\R^{d})$ is defined by $[\Gamma\hat{h}](\xi)=\gamma(\xi)\hat{h}(\xi)$, $\xi\in\R^{d}$.
  Define the Hilbert space $H_\Gamma:=\mathrm{Im}(\Gamma)$. 
  Let $X=(x_i)_{i=1}^M\in \R^{d\times M}$, $Y=(y_i)_{i=1}^M \in \R^{M}$ and $P: H_\Gamma\rightarrow \R^{M}$ be a surjective operator
  \begin{equation}
  P: \hat{h}\mapsto \left(\int_{\R^{d}}\hat{h}(\xi)\E^{2\pi\I x_i\cdot \xi}\diff{\xi}\right)_{i=1}^M=(h(x_i))_{i=1}^M.
  \end{equation}
  Then the following two problems are equivalent in the sense that $\hat{h}_\infty=\hat{h}_{\min}$.

  The initial value problem
  \begin{equation*}
  \dfrac{\D \hat{h}(\xi)}{\D t}=(\gamma(\xi))^2\sum_{i=1}^M(y_i\E^{-2\pi\I x_i\cdot\xi}-\hat{h}(\xi)*\E^{-2\pi\I x_i\cdot\xi}),\quad \hat{h}(0)=\hat{h}_{\rm ini}.
  \end{equation*}
The minimization problem
  \begin{equation*}
    \min_{\hat{h}-\hat{h}_{\rm ini}\in H_\Gamma}\int_{\R^{d}}(\gamma(\xi))^{-2}|\hat{h}(\xi)-\hat{h}_{\rm ini}(\xi)|^2\diff{\xi},\quad \rm{s.t.}\quad h(x_i)=y_i,\quad i=1,\cdots,M.
  \end{equation*}
\end{cor}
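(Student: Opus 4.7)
My plan is to obtain this corollary as a direct instance of Theorem~\ref{MTthm..EquivalenceDynamicsMinimization} after identifying the correct Hilbert space structure and operator. I would first equip $H_\Gamma=\mathrm{Im}(\Gamma)$ with the weighted inner product $\langle \hat{h}_1,\hat{h}_2\rangle_{H_\Gamma}:=\int_{\R^d}\gamma(\xi)^{-2}\hat{h}_1(\xi)\overline{\hat{h}_2(\xi)}\diff{\xi}$, which is well-defined and Hilbert because any $\hat{h}=\Gamma\hat{u}\in H_\Gamma$ has $\gamma^{-1}\hat{h}=\hat{u}\in L^2(\R^d)$, so the induced norm squared is simply $\|\hat{u}\|_{L^2}^2$. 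Setting $H_1=H_\Gamma$, $H_2=\R^M$ with the standard inner product, and $g=Y$, the operator $P$ in the statement is the one required by the theorem; it is surjective by assumption.

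With this choice in hand, I would read off the adjoint $P^*$ by matching inner products. Starting from $\langle P\hat{h},y\rangle_{\R^M}=\sum_{i=1}^M y_i h(x_i)=\int_{\R^d}\hat{h}(\xi)\sum_{i=1}^M y_i\E^{2\pi\I x_i\cdot\xi}\diff{\xi}$ and inserting the factor $\gamma(\xi)^{-2}\gamma(\xi)^2$, comparison with $\langle\hat{h},P^*y\rangle_{H_\Gamma}$ yields $[P^*y](\xi)=\gamma(\xi)^2\sum_{i=1}^M y_i\E^{-2\pi\I x_i\cdot\xi}$. Then the convolution identity $\hat{h}(\xi)*\E^{-2\pi\I x_i\cdot\xi}=h(x_i)\E^{-2\pi\I x_i\cdot\xi}$, which follows immediately from Fourier inversion $h(x_i)=\int\hat{h}(\eta)\E^{2\pi\I x_i\cdot\eta}\diff{\eta}$, shows that $[P^*(Y-P\hat{h})](\xi)$ agrees exactly with the right-hand side of the stated initial value problem; hence the ODE has the abstract form $\D\hat{h}/\D t=P^*(g-P\hat{h})$, and the minimization problem in the corollary is the abstract one in Theorem~\ref{MTthm..EquivalenceDynamicsMinimization} with norm $\|\cdot\|_{H_\Gamma}$.

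To apply Theorem~\ref{MTthm..EquivalenceDynamicsMinimization}, the remaining item is that $PP^*$ is surjective. Since $H_2=\R^M$ is finite dimensional, surjectivity is equivalent to injectivity; if $PP^*y=0$ then $\|P^*y\|_{H_\Gamma}^2=\langle PP^*y,y\rangle_{\R^M}=0$, so $P^*y=0$, whence $\langle P\hat{h},y\rangle_{\R^M}=\langle\hat{h},P^*y\rangle_{H_\Gamma}=0$ for every $\hat{h}\in H_\Gamma$, and the assumed surjectivity of $P$ forces $y=0$. Feeding this into the theorem identifies $\hat{h}_\infty$ with $\hat{h}_{\min}$ and completes the proof. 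I expect the main obstacle to be the bookkeeping in defining the weighted inner product consistently on $H_\Gamma$ (and tracking complex conjugates between the complex space $H_\Gamma$ and the real space $\R^M$) together with the convolution rewrite that brings the ODE into abstract form; once these identifications are in place, everything else is a routine consequence of Theorem~\ref{MTthm..EquivalenceDynamicsMinimization}.
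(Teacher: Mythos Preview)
Your proposal is correct and follows essentially the same route as the paper: identify the weighted Hilbert space $H_\Gamma$, compute the adjoint so that the stated ODE becomes $\D\hat{h}/\D t=P^*(g-P\hat{h})$, and invoke Theorem~\ref{MTthm..EquivalenceDynamicsMinimization}. The only cosmetic difference is that the paper first packages the weighted situation into an intermediate corollary (replacing $u$ by $\Gamma u$ and $P$ by $P\Gamma$ in Theorem~\ref{MTthm..EquivalenceDynamicsMinimization}) and then specializes, whereas you apply Theorem~\ref{MTthm..EquivalenceDynamicsMinimization} directly with $H_1=H_\Gamma$; consequently the paper computes the $L^2$-adjoint $[P^*Y](\xi)=\sum_i y_i\E^{-2\pi\I x_i\cdot\xi}$ and carries the factor $\gamma^2$ separately, while you absorb $\gamma^2$ into the $H_\Gamma$-adjoint---these are equivalent bookkeeping choices.
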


Note that in Appendix \ref{Sec:equiv}, we provide another version of Corollary
\ref{cor..EquivalencdHWFrequency} for the discretized frequency, which is considered in Section \ref{FPapriori}.

\subsection{Explicitizing the implicit bias for two-layer NNs \label{subsec:Explicit-regularization-of}}

By Corollary \ref{cor..EquivalencdHWFrequency}, we derive
the following constrained optimization problem explicitly minimizing
an FP-norm (see Section \ref{FPnorm}), whose solution is equivalent to that of the LFP model  (\ref{eq:relunnFP}),
that is, 
\begin{equation}
\min_{h-h_{\mathrm{ini}}\in F_{\gamma}}\int\left(\frac{\frac{1}{N}\sum_{i=1}^{N}\left(|r_{i}(0)|^{2}+w_{i}(0)^{2}\right)}{|\xi|^{d+3}}+\frac{4\pi^{2}\frac{1}{N}\sum_{i=1}^{N}\left(|r_{i}(0)|^{2}w_{i}(0)^{2}\right)}{|\xi|^{d+1}}\right)^{-1}|\hat{h}(\xi)-\hat{h}_{{\rm ini}}(\xi)|^{2}\mathrm{d}\xi,\label{eq: minFPnorm}
\end{equation}
subject to constraints
$h(x_{i})=y_{i}$ for $i=1,\cdots,M$. 
Note that the solutions of the LFP models in Figs. (\ref{fig:2relu}, \ref{fig:2relu-1})
are obtained by solving another form of this optimization problem (see Appendix \ref{sec:Numerical-solution-of}). This explicit penalty
indicates that the learning of DNN is biased towards functions with
more power at low frequencies (more precisely, functions of smaller FP-norm),
which is speculated in \citet{xu_training_2018,rahaman2018spectral,xu2019frequency}.
Next, we extend this special example to the case of a general weight function $(\gamma(\xi))^{-2}$ in the frequency domain. 

\section{FP-norm and an \emph{a priori} generalization error bound \label{FPapriori}}

The equivalent explicit optimization problem (\ref{eq: minFPnorm})
provides a way to analyze the generalization of sufficiently wide two-layer NNs. We begin with the definition of
an FP-norm, which naturally induces a FP-space containing all possible solutions of a target NN, whose
 Rademacher complexity can be controlled by the FP-norm of the
target function. Thus we obtain an \emph{a priori} estimate of the generalization error of NN by the theory of Rademacher complexity. Our \emph{a priori}
estimates follows the Monte Carlo
error rates with respect to the sample size. Importantly, Our estimate unravels how frequency components of the target function affect the generalization performance of DNNs. 

\subsection{FP-norm and FP-space\label{FPnorm}}

We denote $\Z^{d*}:=\Z^d\backslash\{0\}$.
Given a frequency weight function $\gamma: \Z^{d}\to \R^+$ or $\gamma: \Z^{d*}\to \R^+$ satisfying
\begin{equation}
  \norm{\gamma}_{\ell^2}=\left(\sum_{k\in\Z^{d}}(\gamma(k))^2\right)^{\frac{1}{2}}<+\infty\quad\text{or}\quad \norm{\gamma}_{\ell^2}=\left(\sum_{k\in\Z^{d*}}(\gamma(k))^2\right)^{\frac{1}{2}}<+\infty,
\end{equation}
we define the FP-norm for all function $h\in L^2(\Omega)$: 
\begin{equation}
  \norm{h}_{\gamma}:=\norm{\hat{h}}_{H_\Gamma}=\left(\sum_{k\in\Z^{d}}(\gamma(k))^{-2}|\hat{h}(k)|^{2}\right)^{\frac{1}{2}}.
\end{equation}
If $\gamma:\Z^{d*}\to\R^+$ is not defined at $0$, we set $(\gamma(0))^{-1}:=0$ in the above definition and $\norm{\cdot}_\gamma$ is only a semi-norm of $h$. Next we define the FP-space
\begin{equation}
  F_{\gamma}(\Omega)=\{h\in L^2(\Omega):\norm{h}_{\gamma}<\infty\}.
\end{equation}
Clearly, for any $\gamma$, the FP-space is a subspace of $L^2(\Omega)$. In addition, if $\gamma: k\mapsto |k|^{-m/2}$ for $k\in\Z^{d*}$, then functions in the FP-space with $\hat{h}(0)=\int_{\Omega} h(x) \diff{x}=0$ form the Sobolev space $H^m(\Omega)$. Note that in the case of DNN, according to the F-Principle, $(\gamma(k))^{-2}$
increases with the frequency. Thus, the contribution of high frequency
to the FP-norm is more significant than that of low frequency.

\subsection{\emph{a priori} generalization error bound }

The following lemma shows that the FP-norm closely relates to the
Rademacher complexity, which is defined as 
\begin{equation}
\tilde{R}(\mathcal{H})=\frac{1}{M}\mathbb{E}_{\e}\left[\sup_{h\in\mathcal{H}}\sum_{i=1}^{M}\e_{i}h(x_{i})\right].
\end{equation}
for the function space $\mathcal{H}$.

\begin{lem}
\label{Rad boundMT}  (i) For $\mathcal{H}=\{h:\norm{h}_{\gamma}\leq Q\}$ with $\gamma: \Z^{d}\to \R^+$, we have
  \begin{equation}
    \tilde{R}(\mathcal{H})\leq\frac{1}{\sqrt{M}}Q\norm{\gamma}_{\ell^2}.
  \end{equation}
  (ii) For $\mathcal{H}'=\{h:\norm{h}_{\gamma}\leq Q, \abs{\hat{h}(0)}\leq c_{0}\}$ with $\gamma: \Z^{d*}\to \R^+$ and $\gamma^{-1}(0):=0$, we have
  \begin{equation}
    \tilde{R}(\mathcal{H}')\leq \frac{c_{0}}{\sqrt{M}}+\frac{1}{\sqrt{M}}Q\norm{\gamma}_{\ell^2}.
  \end{equation}
\end{lem}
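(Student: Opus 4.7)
The plan is to use the Fourier series expansion of $h$ on $\Omega=\mathbb{T}^d$ together with a weighted Cauchy--Schwarz inequality tailored to the FP-norm. For part (i), I would first write
\begin{equation*}
\sum_{i=1}^{M}\e_{i}h(x_{i})=\sum_{k\in\Z^{d}}\hat{h}(k)\,a(k),\qquad a(k):=\sum_{i=1}^{M}\e_{i}\E^{2\pi\I k\cdot x_{i}},
\end{equation*}
and then split the coefficient as $\hat{h}(k)=\gamma(k)^{-1}\hat{h}(k)\cdot\gamma(k)$, so Cauchy--Schwarz in $\ell^{2}(\Z^{d})$ yields
\begin{equation*}
\left|\sum_{i=1}^{M}\e_{i}h(x_{i})\right|\le \norm{h}_{\gamma}\left(\sum_{k\in\Z^{d}}\gamma(k)^{2}|a(k)|^{2}\right)^{\!1/2}.
\end{equation*}
The sup over $\mathcal{H}$ then contributes a factor of $Q$ and is independent of $\e$.

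The second step is to take the expectation over the Rademacher variables. By Jensen's inequality,
\begin{equation*}
\E_{\e}\Big(\sum_{k}\gamma(k)^{2}|a(k)|^{2}\Big)^{1/2}\le\Big(\sum_{k}\gamma(k)^{2}\E_{\e}|a(k)|^{2}\Big)^{1/2},
\end{equation*}
and expanding $|a(k)|^{2}=\sum_{i,j}\e_{i}\e_{j}\E^{2\pi\I k\cdot(x_{i}-x_{j})}$ together with $\E[\e_{i}\e_{j}]=\delta_{ij}$ gives $\E_{\e}|a(k)|^{2}=M$ for every $k$. Combining the two steps and dividing by $M$ yields the stated bound $Q\norm{\gamma}_{\ell^{2}}/\sqrt{M}$.

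For part (ii), I would isolate the zero-frequency mode before applying Cauchy--Schwarz, writing
\begin{equation*}
\sum_{i=1}^{M}\e_{i}h(x_{i})=\hat{h}(0)\sum_{i=1}^{M}\e_{i}+\sum_{k\in\Z^{d*}}\hat{h}(k)\,a(k),
\end{equation*}
bounding the first term by $c_{0}\,|\sum_{i}\e_{i}|$ and estimating $\E_{\e}|\sum_{i}\e_{i}|\le(\E_{\e}|\sum_{i}\e_{i}|^{2})^{1/2}=\sqrt{M}$, and treating the second sum exactly as in part (i) but restricted to $\Z^{d*}$ (so that the convention $\gamma^{-1}(0):=0$ harmlessly removes the $k=0$ contribution from $\norm{h}_{\gamma}$). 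Adding the two contributions and dividing by $M$ produces the claimed bound $c_{0}/\sqrt{M}+Q\norm{\gamma}_{\ell^{2}}/\sqrt{M}$.

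I do not expect any serious obstacle: the argument is essentially a weighted Cauchy--Schwarz plus a variance computation for Rademacher sums, and the hypothesis $\norm{\gamma}_{\ell^{2}}<\infty$ guarantees that all interchanges of sum and expectation are justified by Fubini/monotone convergence. The only mildly delicate point is ensuring that Fourier inversion is applied pointwise at the data points $x_{i}$; this is legitimate because the FP-norm condition $\norm{h}_{\gamma}\le Q$ combined with $\norm{\gamma}_{\ell^{2}}<\infty$ implies $\sum_{k}|\hat{h}(k)|<\infty$ by Cauchy--Schwarz, so the Fourier series converges absolutely and uniformly.
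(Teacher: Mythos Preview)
Your proposal is correct and is essentially identical to the paper's own argument: Fourier-expand $\sum_i \e_i h(x_i)$, apply a $\gamma$-weighted Cauchy--Schwarz, use Jensen to pull the expectation inside the square root, and evaluate $\E_\e|a(k)|^2=M$; for (ii) the paper likewise isolates the $k=0$ mode and bounds $\E_\e|\hat\e(0)|\le\sqrt{M}$ just as you do. Your additional remark on absolute convergence justifying pointwise Fourier inversion is a welcome technical clarification that the paper leaves implicit.
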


By Lemma \ref{FPnorm Bound} in Appendix \ref{proofapriori}, the FP-norm of the solution for the optimization problem in Corollary \ref{cor..EquivalencdHWFrequency} is bounded
by $Q=\norm{f-h_{\mathrm{ini}}}_{\gamma}$. Then, we obtain the following estimate of the generalization error bound.
\begin{thm}
\label{thm:priorierror}Suppose that the real-valued target function $f\in F_\gamma(\Omega)$, the training dataset $\{x_{i}; y_i\}_{i=1}^{M}$ satisfies $y_i=f(x_{i})$, $i=1,\cdots,M$, and $h_{M}$ is the solution of the regularized model
  \begin{equation}
    \min_{h-h_{\rm ini}\in F_
    \gamma(\Omega)} \norm{h-h_{{\rm ini}}}_{\gamma},\quad\text{s.t.}\quad h(x_i)=y_i,\quad i=1,\cdots,M.
  \end{equation}
  Then we have

  (i) given $\gamma: \Z^{d}\to \R^+$, 
  for any $\delta\in(0,1)$, with probability at least $1-\delta$ over
  the random training samples, the population risk has the bound
  \begin{equation}
    L(h_{M})
    \leq \norm{f-h_{{\rm ini}}}_{\gamma}\norm{\gamma}_{\ell^2}
    \left(\frac{2}{\sqrt{M}}+4\sqrt{\frac{2\log(4/\delta)}{M}}\right).
  \end{equation}

  (ii) given $\gamma: \Z^{d*}\to \R^+$ with $\gamma(0)^{-1}:=0$, for any $\delta\in(0,1)$,
  with probability at least $1-\delta$ over the random training samples,
  the population risk has the bound
  \begin{equation}
    L(h_{M})
    \leq \left(\norm{f-h_{\rm ini}}_{\infty}+2\norm{f-h_{{\rm ini}}}_{\gamma}\norm{\gamma}_{\ell^2}
    \right)
    \left(\frac{2}{\sqrt{M}}+4\sqrt{\frac{2\log(4/\delta)}{M}}\right).
  \end{equation}
\end{thm}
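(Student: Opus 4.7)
The plan is a standard Rademacher-complexity argument for minimum-norm interpolants, combining the optimality of $h_M$ with Lemma \ref{Rad boundMT}. First I would exploit feasibility of the target: since $f(x_i)=y_i$ by assumption, $f$ itself is admissible in the optimization defining $h_M$, so by optimality
\begin{equation*}
  \norm{h_M - h_{\mathrm{ini}}}_\gamma \leq \norm{f - h_{\mathrm{ini}}}_\gamma =: Q,
\end{equation*}
placing both $h_M$ and $f$ in the hypothesis class $\mathcal{H}=\{h:\norm{h-h_{\mathrm{ini}}}_\gamma \leq Q\}$. Since Rademacher complexity is translation-invariant, Lemma \ref{Rad boundMT}(i) immediately gives $\tilde{R}(\mathcal{H})=\tilde{R}(\mathcal{H}-h_{\mathrm{ini}})\leq Q\norm{\gamma}_{\ell^2}/\sqrt{M}$.

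Next I would derive the pointwise bound that McDiarmid requires. Applying Cauchy--Schwarz to the Fourier expansion of $h-h_{\mathrm{ini}}$ after inserting $\gamma(k)\gamma(k)^{-1}$,
\begin{equation*}
  \norm{h-h_{\mathrm{ini}}}_\infty \leq \sum_{k}\abs{\hat{h}(k)-\hat{h}_{\mathrm{ini}}(k)} \leq \norm{h-h_{\mathrm{ini}}}_\gamma \,\norm{\gamma}_{\ell^2} \leq Q\norm{\gamma}_{\ell^2},
\end{equation*}
so $\abs{h(x)-f(x)}\leq 2Q\norm{\gamma}_{\ell^2}$ uniformly on $\mathcal{H}$.

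With these two inputs I would invoke the standard symmetrization-plus-McDiarmid template on the loss class $\mathcal{L}=\{\ell(h(\cdot),f(\cdot)):h\in\mathcal{H}\}$. Since $h_M$ interpolates, its empirical risk vanishes and $L(h_M)$ equals the generalization gap. A $1$-Lipschitz surrogate and Talagrand's contraction carry $\tilde{R}(\mathcal{H}-h_{\mathrm{ini}})$ to $\tilde{R}(\mathcal{L})$ with constant one; symmetrization doubles this to produce the $2/\sqrt{M}$ coefficient; and two McDiarmid applications, each with failure probability $\delta/2$ (one converting expected to empirical Rademacher complexity, one bounding the supremal gap), combined with the uniform bound $2Q\norm{\gamma}_{\ell^2}$, yield the $4\sqrt{2\log(4/\delta)/M}$ term. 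For part (ii), $\gamma(0)^{-1}:=0$ leaves the FP-norm blind to the constant Fourier mode, so Lemma \ref{Rad boundMT}(ii) is invoked instead, with a constant $c_0$ controlling $\abs{\hat{h}(0)-\hat{h}_{\mathrm{ini}}(0)}$; repeating the pointwise estimate with the zero mode handled separately then produces the extra $\norm{f-h_{\mathrm{ini}}}_\infty$ prefactor in the stated bound.

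The principal obstacle is tracking the constants exactly so the displayed inequalities emerge as written: keeping the dependence on $Q\norm{\gamma}_{\ell^2}$ linear (rather than quadratic, as a naive MSE calculation would give) requires the Lipschitz-contraction surrogate above, and recovering precisely $\log(4/\delta)$ requires splitting the failure probability evenly across the two McDiarmid steps and invoking the union bound. The part-(ii) refinement further demands that $c_0$ be taken exactly as $\norm{f-h_{\mathrm{ini}}}_\infty$, which comes from recognising that $h_M-h_{\mathrm{ini}}$ inherits its zero-frequency component from the interpolation constraints satisfied by $f-h_{\mathrm{ini}}$.
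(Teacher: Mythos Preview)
Your overall strategy matches the paper's proof closely: use feasibility of $f$ to bound $\norm{h_M-h_{\rm ini}}_\gamma\leq\norm{f-h_{\rm ini}}_\gamma=:Q$, place $h_M$ in a data-independent class, invoke Lemma~\ref{Rad boundMT} for the Rademacher term, bound the sup-norm diameter by Cauchy--Schwarz on the Fourier side, and combine via a uniform-deviation inequality using $\tilde{L}_M(h_M)=0$. For part (i) this is exactly what the paper does.

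There is, however, a concrete gap in your part (ii). You assert that one may take $c_0=\norm{f-h_{\rm ini}}_\infty$, arguing that the zero mode of $h_M-h_{\rm ini}$ is ``inherited from the interpolation constraints''. But the constraints only fix $(h_M-h_{\rm ini})(x_i)=(f-h_{\rm ini})(x_i)$ at the sample points, not the integral $\widehat{(h_M-h_{\rm ini})}(0)=\int_\Omega(h_M-h_{\rm ini})$. Expanding any one constraint in Fourier gives
\[
\widehat{(h_M-h_{\rm ini})}(0)=(f-h_{\rm ini})(x_i)-\sum_{k\in\Z^{d*}}\widehat{(h_M-h_{\rm ini})}(k)\,\E^{2\pi\I k\cdot x_i},
\]
so after Cauchy--Schwarz one only obtains
\[
\bigl|\widehat{(h_M-h_{\rm ini})}(0)\bigr|\leq\norm{f-h_{\rm ini}}_\infty+\norm{h_M-h_{\rm ini}}_\gamma\norm{\gamma}_{\ell^2}\leq\norm{f-h_{\rm ini}}_\infty+\norm{f-h_{\rm ini}}_\gamma\norm{\gamma}_{\ell^2}.
\]
This is the content of the paper's auxiliary Lemma on the zero mode, and it is the value of $c_0$ actually used. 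Plugging this $c_0$ into Lemma~\ref{Rad boundMT}(ii) together with $Q=\norm{f-h_{\rm ini}}_\gamma$ yields the prefactor $\norm{f-h_{\rm ini}}_\infty+2\norm{f-h_{\rm ini}}_\gamma\norm{\gamma}_{\ell^2}$ appearing in the theorem. Your choice $c_0=\norm{f-h_{\rm ini}}_\infty$ would instead give the smaller prefactor $\norm{f-h_{\rm ini}}_\infty+\norm{f-h_{\rm ini}}_\gamma\norm{\gamma}_{\ell^2}$, which is stronger than what is claimed and not supported by the argument you sketch.

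A minor remark on the template: the paper does not pass through a Lipschitz contraction on the loss class at all; it directly invokes a uniform-deviation inequality of the form $|L(h)-\tilde L_M(h)|\leq 2\tilde R(\mathcal H)+2\sup_{h,h'}\norm{h-h'}_\infty\sqrt{2\log(4/\delta)/M}$ with $\tilde R$ taken on the hypothesis class itself. Your ``$1$-Lipschitz surrogate'' detour is therefore not needed to reproduce the stated constants.
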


\begin{rem}
  By the assumption in the theorem, the target function $f$ belongs to $F_\gamma(\Omega)$ which is a subspace of $L^2(\Omega)$. In most applications, $f$ is also a continuous function. In any case, $f$ can be well-approximated by a large neural network due to universal approximation theory \citet{cybenko1989approximation}.
\end{rem}

Our a priori generalization error bound in Theorem. \ref{thm:priorierror}
is large if the target function possesses significant high frequency
components. Thus, it explains the failure of DNNs in generalization for learning the
parity function \citep{shalev2017failures}, whose power concentrates at high
frequencies. In the following, We use experiments to illustrate that, as predicted by our a priori generalization error bound, larger FP-norm of the target function indicates a larger generalization error.

\subsection{Experiment}

In this section, we train a ReLU-NN of width 1-5000-1 to fit 20
uniform samples of $f(x)=\sin(2\pi vx)$ on $[0,1]$ until the training
MSE loss is smaller than $10^{-6}$, where $v$ is the frequency. We then use 500 uniform samples
to test the NN. The FP-norm of the target function is computed by 

\begin{equation}
\norm{f}_{\gamma}=\left(\sum_{\xi\in\Z^{d*}}\left(\frac{\frac{1}{N}\sum_{i=1}^{N}\left(|r_{i}(0)|^{2}+w_{i}(0)^{2}\right)}{|\xi|^{d+3}}+\frac{4\pi^{2}\frac{1}{N}\sum_{i=1}^{N}\left(|r_{i}(0)|^{2}w_{i}(0)^{2}\right)}{|\xi|^{d+1}}\right)\abs{\hat{f}(\xi))}^{2}\right)^{1/2},\label{eq:fpdft}
\end{equation}
where $\hat{f}(\xi)$ is computed by the discrete Fourier transform
of $f(x)$. As shown in Fig. \ref{fig:fpnorm}, a larger FP-norm of the target function is related to a larger test error. 
\begin{center}
\begin{figure}
\begin{centering}
\includegraphics[scale=0.3]{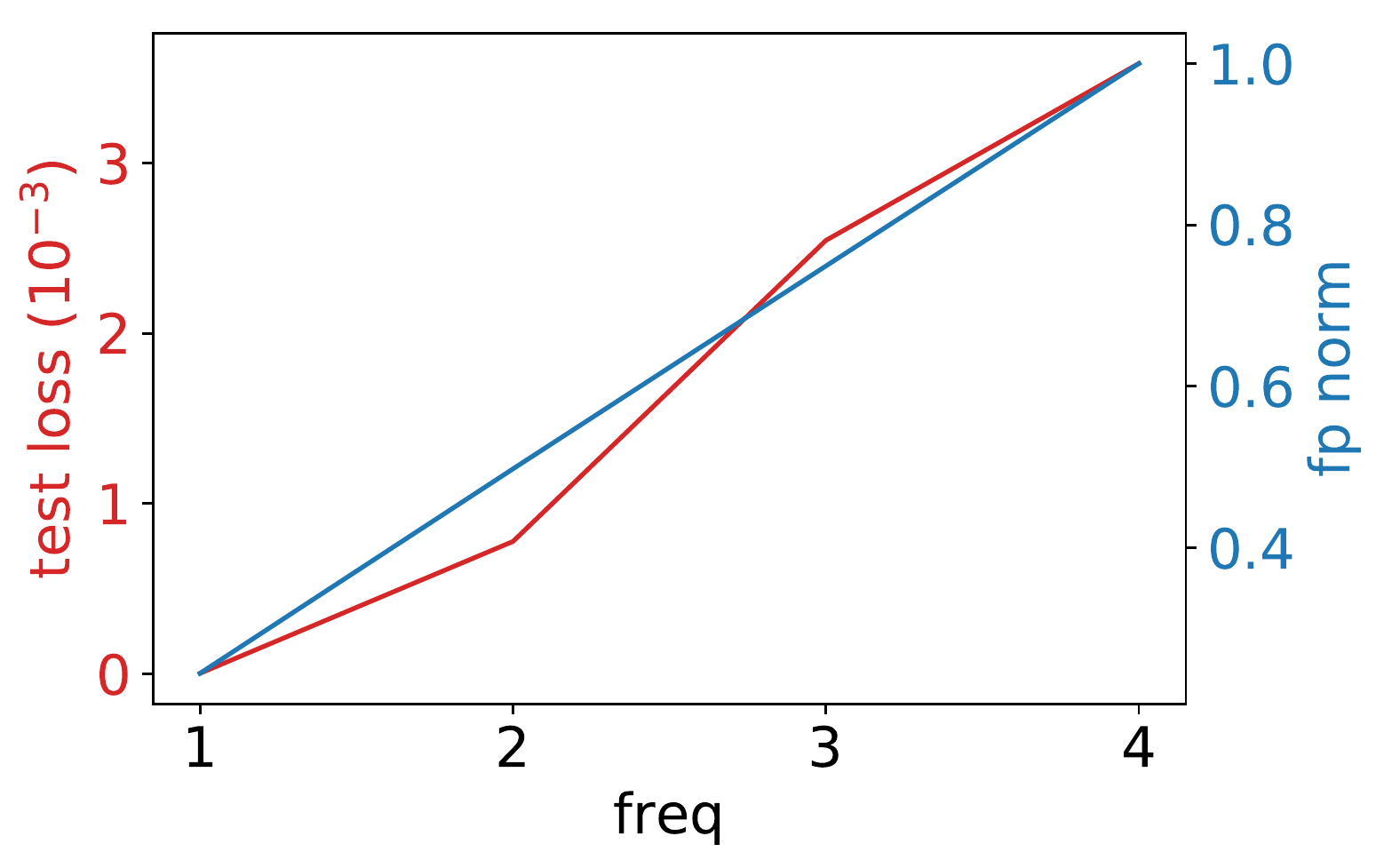} 
\par\end{centering}
\caption{normalized FP-norm and test loss are plotted as a function of frequency $v$ of the target function $\sin(2\pi vx)$.\label{fig:fpnorm} }
\end{figure}
\par\end{center}

\section{Discussion}

In this work, inspired by the F-Principle, we propose an effective
LFP model for NNs --- a model quantitatively well predicts the output
of wide two-layer ReLU NNs and is theoretically rationalized by their
training dynamics in an extremely over-parameterized regime. We explicitize
the implicit bias of the F-Principle by a constrained optimization
problem equivalent to the LFP model. This explicitization leads to
an \emph{a priori} estimate of the generalization error bound, which
depends on the FP-norm of the target function. Note
that, our LFP model based on the ReLU transfer function
can be naturally extended to other transfer functions following a
similar construction process.

As a candidate of an effective model of DNNs, the LFP model advances
our qualitative/empirical understandings of the F-Principle to a quantitative
level. i) With ASI trick \citep{zhang_type_2019} offsetting the initial
DNN output to zero, the LFP model indicates that the F-Principle also
holds for DNNs initialized with large weights. Therefore, ``initialized
with small parameters'' \citep{xu_training_2018,xu2019frequency}
is not a necessary condition for the F-Principle. ii) Based on
the qualitative behavior of F-Principle, previous works \citep{xu_training_2018,xu2019frequency,rahaman2018spectral}
speculate that ``DNNs prefer to learn the training data by a low
frequency function''. With an equivalent optimization problem explicitizing
the F-Principle, the LFP model quantifies this speculation. 

Our \emph{a priori} generalization error bound increases as the FP-norm
of the target function increases. This explains several important
phenomena. First, DNNs fail to generalize well for the parity function
\citep{shalev2017failures}. \citet{xu2019frequency} shows that this
is due to the inconsistency between the high frequency dominant property
of the parity function and the low frequency preference of DNNs. In
this work, by our \emph{a priori} generalization error bound, the dominant
high frequency of the parity function quantitatively results in a
large FP-norm, thus, a large generalization error. Second, because
randomly labeled data possesses large high frequency components, which
induces a large FP-norm of any function well matches the training
data and test data, we expect a very large generalization error, e.g., no generalization,
as observed in experiments. Intuitively, our estimate indicates
good generalization of  Ns forNwell-structured low-frequency
dominant real dataset as well as bad generalization of NNs for randomly labeled data, thus providing
insight into the well known puzzle of generalization of DNNs \citep{zhang2016understanding}.

The F-Principle, a widely observed implicit bias of DNNs, is also
a natural bias for human. Empirically, when a human see several
points of training data, without a specific prior, one tends to interpolate
these points by a low frequency dominant function. Therefore, the success of
DNN may partly result from its adoption of a similar interpolation
bias as human's. In general, there could be multiple types of implicit biases underlying the training dynamics of a DNN. Inspired
by the LFP model, discovering and explicitizing these implicit biases
could be a key step towards a thorough quantitative understanding
of deep learning.

\section*{Acknowledgments}

The authors want to thank Prof. Weinan E for helpful discussions. ZX, YZ are supported by the NYU Abu Dhabi Institute G1301. 

\newpage{}

\subsection*{\bibliographystyle{agsm}
\bibliography{DLRef}
}

\newpage{}

\part*{Appendix}

\section{Rationalization of the LFP dynamics \label{rationalization}}
In this section, we derive the dynamics of each frequency component
of the loss function when a two-layer ReLU-NN is used to fit a $d$-dimensional function. Under mild assumption, we can clearly see that a lower frequency component has a faster convergence speed.

The starting point is the following \emph{linearized mean field residual dynamics}~\cite{mei2018mean,mei2019mean}:
\begin{equation}
  \partial_t u(x, t) = -\int_{\R^d} K_{\theta_0}(x, z)u_p(z, t)\diff{z},
\end{equation}
here $\theta_0$ denotes the initial parameters of the training dynamics, considering the linear kernel regime. The kernel is defined as
\begin{equation}\label{linear-kernel}
  K_{\theta_0}(x, z) = \int_{\R^{d+2}} \left[\nabla_{\theta}\sigma^*(x, \theta_0) \cdot \nabla_{\theta}\sigma^*(z, \theta_0)\right]\rho(w_0, r_0, l_0)\diff{w_0}\diff{r_0}\diff{l_0},
\end{equation}
with $\sigma^*(\cdot, \theta_0) = w_0\sigma\left(r_0\cdot[\cdot] - \abs{r_0}l_0)\right)$ and $\sigma$ is the ReLU function. For simplicity, we assume the parameters are isotropic and $\rho(l_0)=C_{l_0}$, $\rho(e_{r_0}) = C_{e_{r_0}}$ to be specified later that is,
\begin{equation}
    \rho(w_0, r_0, l_0)\diff{w_0}\diff{r_0}\diff{l_0} = C_{e_{r_0}}C_{l_0}\rho(w_0)\rho(\abs{r_0})\diff{w_0}\diff{\abs{r_0}}\diff{e_{r_0}}\diff{l_0} := \rho(\diff{\theta_0}),
\end{equation}
where $e_{r_0}$ is the unit vector of $r_0$. Then gradient of $\sigma^*$ with respect to the parameters is
\begin{equation}\label{grads}
  \nabla_{\theta}\sigma^*(\cdot, \theta_0)
  = \begin{pmatrix}
  \sigma(r_0\cdot [\cdot] - \abs{r_0}l_0) \\
  w_0(\cdot - l_0 e_{r_0})\sigma'(r_0\cdot [\cdot] - \abs{r_0}l_0)  \\
  - w_0 \abs{r_0} \sigma'(r_0\cdot [\cdot] - \abs{r_0}l_0)
  \end{pmatrix}.
\end{equation}

For any given vector $r_0$, we can decompose the gradient with respect to $r_0$, that is, second row in the above gradient vector, into two directions: parallel and perpendicular to $r_0$, i.e.,
\begin{align}
  \nabla_{r_0}\sigma^*(\cdot, \theta_0) 
  & = \frac{r_0\cdot\nabla_{r_0}\sigma^*(\cdot, \theta_0)}{\abs{r_0}^2}r_0 + \left[\nabla_{r_0}\sigma^*(\cdot, \theta_0) - \frac{r_0\cdot\nabla_{r_0}\sigma^*(\cdot, \theta_0)}{\abs{r_0}^2}r_0\right] \\
  & = w_0(r_0\cdot [\cdot] - \abs{r_0} l_0)\sigma'(r_0\cdot [\cdot] - \abs{r_0} l_0)\frac{r_0}{\abs{r_0}^2} + w_0 [\cdot]_{\perp r_0}\sigma'(r_0\cdot [\cdot] - \abs{r_0} l_0) \\
  & = w_0\sigma(r_0\cdot [\cdot] - \abs{r_0}l_0)\frac{
    r_0}{\abs{r_0}^2} + w_0 [\cdot]_{\perp r_0}\sigma'(r_0\cdot [\cdot] - \abs{r_0}l_0),
\end{align}
where $[\cdot]_{\perp r_0} = [\cdot] - \frac{[\cdot]\cdot r_0}{\abs{r_0}^2}r_0$ and $[\cdot]\sigma'(\cdot) = \sigma(\cdot)$ due to the property of the ReLU function.
Thus, the gradients~\eqref{grads} can be written into two parts
\begin{align}
  \nabla_{\theta}\sigma^*(\cdot, \theta_0)
  & =
  \begin{pmatrix}
    \sigma(r_0\cdot [\cdot] - \abs{r_0}l_0) \\
    \frac{w_0}{\abs{r_0}^2}\sigma(r_0\cdot [\cdot] - \abs{r_0}l_0)r_0 \\
    - w_0 \abs{r_0}\sigma'(r_0\cdot [\cdot] - \abs{r_0}l_0)
  \end{pmatrix}
  +
  \begin{pmatrix}
    0 \\
    w_0\sigma'(r_0\cdot [\cdot] - \abs{r_0}l_0)[\cdot]_{\perp r_0}  \\
    0
  \end{pmatrix} \\
  & := A(r_0\cdot [\cdot] - \abs{r_0}l_0) + B.
\end{align}
Then the kernel~\eqref{linear-kernel} can be split into two parts,
\begin{align}
  K_{\theta_0}(x, z) = & {} \int_{\R^{d+2}} A(r_0\cdot x - \abs{r_0}l_0)\cdot A(r_0\cdot z - \abs{r_0}l_0)\rho(\diff{\theta_0})+ \int_{\R^{d+2}} B\cdot B \rho(\diff{\theta_0}) \\
  =: & {} K + G,
\end{align}
In the following computation, we will drop the term $G$. Since this term corresponds to the direction perpendicular to $r_0$, it is not very easy to check by numerical experiments and we only consider the dynamics of the parallel part.

Taking Fourier transform with respect to $x$,
\begin{align}
  \partial_t \hat{u}(\xi, t) & = - \int_{\R^d} \hat{K}_{\theta_0}(\xi, z)u_p(z,t)\diff{z} \\
  & = - \int_{\R^{2d+2}} \F[A(r_0\cdot [\cdot] - \abs{r_0} l_0)](\xi)\cdot A(r_0\cdot z - \abs{r_0} l_0)u_p(z,t)\rho(\diff{\theta_0})\diff{z}.
\end{align}
Notice for any function $f$ defined on $\R$, the following results hold:
\begin{align}\label{FT1}
  \F[f(r_0\cdot [\cdot] - \abs{r_0} l_0)](\xi) 
  & = \int_{\R^d} f(r_0\cdot x - \abs{r_0} l_0)\E^{-2\pi\I\xi\cdot x}\diff{x} \\
  & = \int_{\R^d} f(r_0\cdot x - r_0 l_0)\E^{-2\pi\I\xi\cdot(x - l_0 e_{r_0})} \E^{-2\pi\I\xi\cdot e_{r_0} l_0}\diff{x} \\
  & = \F[f(r_0\cdot [\cdot])](\xi)\E^{-2\pi\I\xi\cdot e_{r_0}l_0},
\end{align}
and
\begin{align}\label{FT2}
    \F[f(r_0\cdot [\cdot])](\xi) & = \int_{\R^d} f(r_0 \cdot x) \E^{-2\pi\I\xi\cdot x}\diff{x} \\
    & = \int_{\R^d} f(r_0\cdot x)\E^{-2\pi\I\xi\cdot x_{\parallel r_0}}\E^{-2\pi\xi\cdot x_{\perp r_0}}\diff{x_{\parallel r_0}}\diff{x_{\perp r_0}} \\
    & = \left(\int_{\R} f(r_0\cdot x_{\parallel r_0})\E^{-2\pi\I\xi\cdot x_{\parallel r_0}}\diff{x_{\parallel r_0}}\right)\left(\int_{\R^{d-1}} e^{-2\pi\I\xi\cdot x_{\perp r_0}} \diff{x_{\perp r_0}} \right) \\
    & = \left(\int_{\R} f(\abs{r_0}y)\E^{-2\pi\I\xi\cdot e_{r_0}y}\diff{y}\right)\delta(\xi_{\perp r_0}) \\
    & = \frac{1}{\abs{r_0}}\F[f]\left(\frac{\xi\cdot e_{r_0}}{\abs{r_0}}\right)\delta(\xi_{\perp r_0}),
\end{align}
where $x_{\parallel r_0} = \frac{r_0\cdot x}{\abs{r_0}^2}r_0$, $x_{\perp r_0} = x - x_{\parallel r_0}$, $\xi_{\perp r_0} = \xi - \frac{\xi\cdot r_0}{\abs{r_0}^2}r_0$ and $y = e_{r_0}\cdot x$. The above delta function is defined as, for any function $f(e_{r_0})$
\begin{equation}\label{delta_e}
  \int_{\mathbb{S}^{d-1}}\delta(\xi_{\perp r_0})f(e_{r_0})\rho(e_{r_0})\diff{e_{r_0}} = C_{e_{r_0}}\abs{\xi}^{-(d-1)}f(e_{\xi}),
\end{equation}
where $C_{e_{r_0}} = \frac{\Gamma(d/2)}{2\pi^{d/2}}$. Combining the above results, one obtains
\begin{align}
  \F[A(r_0\cdot [\cdot] - \abs{r_0} l_0)](\xi)
  & = \F\left[\begin{pmatrix}
   \sigma(r_0\cdot [\cdot] -\abs{r_0}l_0)] \\
    w_0\sigma(r_0\cdot [\cdot] - \abs{r_0}l_0)\frac{
    r_0}{\abs{r_0}^2}  \\
    -w_0 |r_0| \sigma'(r_0\cdot [\cdot] - \abs{r_0}l_0)
  \end{pmatrix}\right](\xi) \\
  & = -\frac{1}{4\pi^2}\begin{pmatrix}
    1 \\
    \frac{w_0 r_0}{\abs{r_0}^2}\\
    2\pi\I w_0(\xi\cdot e_{r_0})
  \end{pmatrix}\frac{|r_0|}{(\xi\cdot e_{r_0})^2}\E^{-2\pi\I\xi\cdot e_{r_0}l_0}\delta(\xi_{\perp r_0}).
\end{align}
So
\begin{multline}
  \partial_t \uhat(\xi, t) = \frac{1}{4\pi^2}\int_{\R^{2d+2}}\delta(\xi_{\perp r_0})\frac{\abs{r_0}}{(\xi\cdot e_{r_0})^2}\Bigg[\left(1+\frac{w^2_0}{\abs{r_0}^2}\right)\sigma(r_0\cdot z - \abs{r_0}l_0) \\
  - 2\pi\I \abs{r_0}w^2_0(\xi\cdot e_{r_0})\sigma'(r_0\cdot z - \abs{r_0}l_0)\Bigg]\E^{-2\pi\I\xi\cdot e_{r_0}l_0}u_p(z,t)\rho(\diff{\theta_0})\diff{z},
\end{multline}
then we first integrate the variable $e_{r_0}$ using~\eqref{delta_e} to get
\begin{align}
  &\partial_t \uhat(\xi, t) = \frac{C_{e_{r_0}}C_{l_0}}{4\pi^2\abs{\xi}^{d+1}}\int_{\R^{d+2}\times\R^+}\Bigg[\left(1+\frac{w^2_0}{\abs{r_0}^2}\right)\abs{r_0}\sigma(\abs{r_0}e_{\xi}\cdot z - \abs{r_0}l_0) \nonumber\\
  & \quad - 2\pi\I w^2_0\abs{r_0}^2\abs{\xi}\sigma'(\abs{r_0}e_{\xi}\cdot z - \abs{r_0}l_0)\Bigg]\E^{-2\pi\I\abs{\xi}l_0}u_p(z,t)\rho(w_0)\rho(\abs{r_0})\diff{w_0}\diff{\abs{r_0}}\diff{l_0}\diff{z} \\
  & = \frac{C_{e_{r_0}}C_{l_0}}{4\pi^2\abs{\xi}^{d+1}}\int_{\R^{d+2}\times\R^+}\Bigg[\left(1+\frac{w^2_0}{\abs{r_0}^2}\right)\abs{r_0}\sigma(\abs{r_0}e_{\xi}\cdot z - \abs{r_0}l_0)\E^{-2\pi\I\abs{\xi}(l_0 - e_{\xi}\cdot z)} \nonumber\\
  &\quad  - 2\pi\I w^2_0\abs{r_0}^2\abs{\xi}\sigma'(\abs{r_0}e_{\xi}\cdot z - \abs{r_0}l_0)\E^{-2\pi\I\abs{\xi}(l_0 - e_{\xi}\cdot z)}\Bigg] u_p(z,t)\E^{-2\pi\I\xi\cdot z}\rho(w_0)\rho(\abs{r_0}) \diff{w_0}\diff{\abs{r_0}}\diff{l_0}\diff{z} \\
  & = \frac{C_{e_{r_0}}C_{l_0}}{4\pi^2\abs{\xi}^{d+1}}\int_{\R\times\R^+}\big[(\abs{r_0}^2+ w_0^2)\overline{\F[\sigma](\abs{\xi})} \nonumber\\
  & \quad\quad\quad\quad\quad\quad - 2\pi\I w_0^2 \abs{r_0}^2\abs{\xi}\overline{\F[\sigma'(\abs{r_0} \cdot)](\abs{\xi})}\big] \widehat{u_p}(\xi, t)\rho(w_0)\rho(\abs{r_0})\diff{w_0}\diff{\abs{r_0}}.
\end{align}
Notice that
\begin{align}
  & \F[\sigma](\abs{\xi}) = - \frac{1}{4\pi^2\abs{\xi}^2}, \\
  & \F[\sigma'(\abs{r_0}\cdot)](\abs{\xi}) = \frac{1}{2\pi\I\abs{\xi}},
\end{align}
thus
\begin{equation}
  \partial_t\uhat(\xi, t) = -\frac{C_{e_{r_0}}C_{l_0}}{16\pi^4}\left\langle\frac{\abs{r_0}^2+w_0^2}{\abs{\xi}^{d+3}} + \frac{4\pi^2\abs{r_0}^2 w_0^2}{\abs{\xi}^{d+1}}\right\rangle_{w_0, \abs{r_0}}\widehat{u_p}(\xi, t),
\end{equation}
where
\begin{equation}
  \langle\cdot\rangle_{w_0, \abs{r_0}} = \int_{\R\times\R^+}\cdot \rho(w_0)\rho(\abs{r_0})\diff{w_0}\diff{\abs{r_0}}.
\end{equation}
Finally, by choosing a suitable time scale the above dynamics can be written as
\begin{equation}
  \partial_t\uhat(\xi, t) = -\left\langle\frac{\abs{r_0}^2+w_0^2}{\abs{\xi}^{d+3}} + \frac{4\pi^2\abs{r_0}^2 w_0^2}{\abs{\xi}^{d+1}}\right\rangle_{w_0, \abs{r_0}}\widehat{u_p}(\xi, t).
\end{equation}

\section{Proofs of the equivalence theorems}\label{Sec:equiv}

Let $H_1$ and $H_2$ be two seperable Hilbert spaces and $P: H_1\rightarrow H_2$ is a surjective linear operator, i.e., $\mathrm{Im} (P)=H_2$. Let $P^*: H_2\rightarrow H_1$ be the adjoint operator of $P$, defined by 
\begin{equation}
  \langle P u_1, u_2\rangle_{H_2}=\langle u_1, P^* u_2\rangle_{H_1},\quad\text{for all}\quad u_1\in H_1, u_2\in H_2.
\end{equation}

\begin{lem}\label{lem..spectrum.positive}
  Suppose that $H_1$ and $H_2$ are two seperable Hilbert spaces and $P:H_1\rightarrow H_2$ and $P^*:H_2\rightarrow H_1$ is the adjoint of $P$. Then all eigenvalues of $P^*P$ and $PP^*$ are non-negative. Moreover, they have the same positive spectrum. If in particular, we assume that the operator $PP^*$ is surjective, then the operator $PP^*$ is invertible.
\end{lem}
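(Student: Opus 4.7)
The plan is to handle the three assertions in the lemma separately, each via a short functional-analytic argument.

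For the non-negativity of the eigenvalues, I would use the standard polarization trick. Suppose $\lambda$ is an eigenvalue of $P^*P$ with eigenvector $v\neq 0$; then
\begin{equation*}
\lambda \norm{v}_{H_1}^2 = \langle P^*Pv, v\rangle_{H_1} = \langle Pv, Pv\rangle_{H_2} = \norm{Pv}_{H_2}^2 \geq 0,
\end{equation*}
so $\lambda\geq 0$. The identical argument with the roles of $P$ and $P^*$ exchanged gives non-negativity of the eigenvalues of $PP^*$.

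For the coincidence of the positive spectra, I would transport eigenvectors explicitly. Given $P^*Pv=\lambda v$ with $\lambda>0$, set $w=Pv\in H_2$; then
\begin{equation*}
PP^*w = PP^*Pv = P(\lambda v) = \lambda w,
\end{equation*}
and $w\neq 0$ because $\norm{w}_{H_2}^2=\norm{Pv}_{H_2}^2=\lambda\norm{v}_{H_1}^2>0$ by the computation above. Symmetrically, from $PP^*w=\lambda w$ with $\lambda>0$ I would set $v=P^*w$ and check $P^*Pv=\lambda v$ with $v\neq 0$. This establishes the bijection of positive-eigenvalue-eigenvectors between $P^*P$ and $PP^*$.

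For the invertibility statement, the key observation is that $PP^*$ is self-adjoint: $(PP^*)^*=PP^*$. Then the standard orthogonal decomposition $\ker(T)=\overline{\mathrm{Im}(T^*)}^{\perp}$ applied to $T=PP^*$ gives $\ker(PP^*)=\overline{\mathrm{Im}(PP^*)}^{\perp}$. If $PP^*$ is assumed surjective, $\mathrm{Im}(PP^*)=H_2$, hence $\ker(PP^*)=\{0\}$. Thus $PP^*$ is bijective, and the open mapping theorem produces a bounded two-sided inverse.

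I do not foresee a substantive obstacle: the assertions are essentially textbook facts about adjoint pairs on Hilbert spaces. The one subtlety worth flagging is that the second assertion, as phrased via eigenvectors, only matches the point spectra of $P^*P$ and $PP^*$; if the paper intends ``spectrum'' in the broader sense, one would additionally invoke the identity $\sigma(PP^*)\setminus\{0\}=\sigma(P^*P)\setminus\{0\}$, which can be shown through resolvent manipulations, but the lemma as stated only needs the eigenvalue version.
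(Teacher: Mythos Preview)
Your proposal is correct. The first two claims are argued exactly as in the paper: the non-negativity via $\langle P^*Pv,v\rangle=\norm{Pv}^2$, and the matching of positive eigenvalues by pushing an eigenvector through $P$ (respectively $P^*$) and checking it is nonzero.

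For the invertibility claim your route differs slightly from the paper's. You invoke self-adjointness of $PP^*$ together with the general identity $\ker(T)=\mathrm{Im}(T^*)^{\perp}$ to conclude $\ker(PP^*)=\mathrm{Im}(PP^*)^{\perp}=\{0\}$ directly from surjectivity. The paper instead argues by hand: from $PP^*u_2=0$ it first deduces $P^*u_2=0$, then uses surjectivity of $PP^*$ to write $u_2=PP^*u_3=P u_1$ with $u_1=P^*u_3$, whence $P^*Pu_1=0$ forces $u_2=Pu_1=0$. Your argument is shorter and more structural; the paper's is more self-contained in that it does not appeal to the closed-range/orthogonality machinery or the open mapping theorem. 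Both are valid, and your remark about point spectrum versus full spectrum is apt but, as you note, not needed for how the lemma is used.
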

\begin{proof}
  We consider the eigenvalue problem $P^*Pu_1=\lambda u_1$. Taking inner product with $u_1$, we have $\langle u_1,P^*Pu_1\rangle_{H_1}=\lambda\norm{u_1}^2_{H_1}$. Note that the left hand side is $\norm{Pu_1}^2_{H_2}$ which is non-negative. Thus $\lambda\geq 0$. Similarly, the eigenvalues of $PP^*$ are also non-negative.

  Now if $P^*P$ has a positive eigenvalue $\lambda>0$, then $P^*Pu_1=\lambda u_1$ with non-zero vector $u_1\in H_1$. It follows that $PP^*(Pu_1)=\lambda (Pu_1)$. It is sufficient to prove that $Pu_1$ is non-zero. Indeed, if $Pu_1=0$, then $P^*Pu_1=0$ and $\lambda=0$ which contradicts with our assumption. Therefore, any positive eigenvalue of $P^*P$ is an eigenvalue of $PP^*$. Similarly, any positive eigenvalue of $PP^*$ is an eigenvalue of $P^*P$.

  Next, suppose that $PP^*$ is surjective.
  We show that $PP^*u_2=0$ has only the trivial solution $u_2=0$. In fact, $PP^*u_2=0$ implies that $\norm{P^*u_2}^2_{H_1}=\langle u_2, PP^*u_2\rangle_{H_2}=0$, i.e., $P^*u_2=0$. Thanks to the surjectivity of $PP^*$, there exists a vector $u_3\in H_2$ such that $u_2=PP^*u_3$. Let $u_1=P^*u_3\in H_1$. Hence $u_2=Pu_1$ and $P^*Pu_1=0$. Taking inner product with $u_1$, we have $\norm{Pu_1}^2_{H_2}=\langle u_1, P^*Pu_1\rangle_{H_1}=0$, i.e., $u_2=Pu_1=0$. Therefore $PP^*$ is injective. This with the surjectivity assumption of $PP^*$ leads to that $PP^*$ is invertible.
\end{proof}

\begin{rem}
  For the finite dimensional case $H_2=\R^M$, conditions for the operator $P$ in Lemma \ref{lem..spectrum.positive} are reduced to that the matrix of $P$ has rank $M$ (full rank).
\end{rem}

Given $g\in H_2$, we consider the following two problems.

(i)
The initial value problem
\begin{equation*}
  \left\{
    \begin{array}{ll}
      \dfrac{\D u}{\D t}=P^*(g-Pu)\\
      u(0)=u_{\rm ini}.
    \end{array}
  \right.
\end{equation*}
Since this equation is linear and with nonpositive eigenvalues on the right hand side, there exists a unique global-in-time solution $u(t)$ for all $t\in[0,+\infty)$ satisfying the initial condition. Moreover, the long-time limit $\lim_{t\rightarrow+\infty}u(t)$ exists and will be denoted as $u_\infty$.

(ii)
The minimization problem
\begin{align*}
  &\min_{u -u_{\rm ini}\in H_1}\norm{u-u_{\rm ini}}_{H_1},\\
  &\text{s.t.}\quad Pu=g.
\end{align*}
In the following, we will show it has a unique minimizer which is denoted as $u_{\min}$.

Now we show the following equivalent theorem.
\begin{thm}\label{thm..EquivalenceDynamicsMinimization}
  Suppose that $PP^*$ is surjective. The above Problems (i) and (ii) are equivalent in the sense that $u_\infty=u_{\min}$. 
  More precisely, we have
  \begin{equation}
    u_\infty=u_{\min}=P^*(PP^*)^{-1}(g-Pu_{\rm ini})+u_{\rm ini}.
  \end{equation}
\end{thm}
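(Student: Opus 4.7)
The plan is to explicitly solve the ODE, verify the claimed formula yields the long-time limit, and then show this same element is the unique minimizer via an orthogonal decomposition of $H_1$. Throughout, let $u_* := P^*(PP^*)^{-1}(g - Pu_{\rm ini})$; this is well-defined by Lemma~\ref{lem..spectrum.positive}, which makes $PP^*$ invertible on $H_2$. A direct computation gives $Pu_* = PP^*(PP^*)^{-1}(g-Pu_{\rm ini}) = g - Pu_{\rm ini}$, so the candidate $u_{\rm ini}+u_*$ is feasible for Problem~(ii).

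First I would analyze Problem~(i) by the substitution $w(t) := u(t) - u_{\rm ini} - u_*$. Using $P^*g = P^*Pu_{\rm ini} + P^*Pu_*$ (which follows from $Pu_{\rm ini} + Pu_* = g$), the inhomogeneous ODE for $u$ collapses to the linear homogeneous equation $\dot{w} = -P^*Pw$ with initial datum $w(0)=-u_*$. Hence $w(t) = -e^{-tP^*P}u_*$ in the functional-calculus sense, since $P^*P$ is self-adjoint and non-negative. I would then argue that $w(t)\to 0$, which is the main technical step.

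The key obstacle is precisely this decay in infinite dimensions: $P^*P$ can have $0$ in its spectrum (on $\ker P$), so $e^{-tP^*P}$ does not decay uniformly. The resolution is to observe that $u_*$ lies in $\operatorname{Im}(P^*)$, which under our assumption is a closed subspace on which $P^*P$ is bounded below by a positive constant. Concretely, because $PP^*$ is self-adjoint, non-negative, and invertible, its spectrum lies in $[c,\infty)$ for some $c>0$; therefore $\|P^*v\|^2 = \langle v,PP^*v\rangle \ge c\|v\|^2$, so $P^*$ has closed range and $(\ker P)^\perp = \operatorname{Im}(P^*)$. For any $u_1 = P^*v \in \operatorname{Im}(P^*)$, combining $\langle P^*Pu_1,u_1\rangle = \|PP^*v\|^2 \ge c^2\|v\|^2$ with $\|u_1\|^2 \le \|v\|\,\|PP^*v\|$ yields $\langle P^*Pu_1,u_1\rangle \ge c\|u_1\|^2$. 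Thus $P^*P \ge cI$ on the closed invariant subspace $\operatorname{Im}(P^*)$, the functional calculus gives $\|e^{-tP^*P}u_*\|\le e^{-ct}\|u_*\|\to 0$, and we conclude $u_\infty = u_{\rm ini}+u_* = P^*(PP^*)^{-1}(g-Pu_{\rm ini}) + u_{\rm ini}$.

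Finally, for Problem~(ii), I would use the orthogonal decomposition $H_1 = \ker P \oplus \operatorname{Im}(P^*)$ established above. Any feasible $u$ satisfies $P(u - u_{\rm ini} - u_*) = g - Pu_{\rm ini} - (g - Pu_{\rm ini}) = 0$, so $u - u_{\rm ini} - u_* \in \ker P$, while $u_* \in \operatorname{Im}(P^*) = (\ker P)^\perp$. Pythagoras gives
\begin{equation*}
\norm{u-u_{\rm ini}}_{H_1}^2 = \norm{u - u_{\rm ini} - u_*}_{H_1}^2 + \norm{u_*}_{H_1}^2,
\end{equation*}
which is uniquely minimized by the feasible choice $u = u_{\rm ini}+u_*$. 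This shows $u_{\min}$ exists, is unique, and coincides with $u_\infty$, yielding the displayed formula and completing the proof.
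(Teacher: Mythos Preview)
Your proof is correct and follows essentially the same strategy as the paper: both identify the candidate $u_*=P^*(PP^*)^{-1}(g-Pu_{\rm ini})$, establish the minimization property via orthogonality (the paper computes $\langle \tilde u-\tilde u_{\min},\tilde u_{\min}\rangle_{H_1}=0$ directly, you phrase it as the decomposition $H_1=\ker P\oplus\operatorname{Im}(P^*)$ and Pythagoras---these are the same argument), and then argue exponential decay of the ODE toward $u_{\rm ini}+u_*$.

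The one substantive difference is the decay step. The paper writes the solution as $\tilde u(t)=P^*(PP^*)^{-1}\tilde g+\sum_{i\in I}c_iv_i\exp(-\lambda_it)$ with $\lambda_i>0$ the positive eigenvalues of $PP^*$, tacitly assuming a discrete positive spectrum; this is justified in the paper's applications because $H_2=\R^M$ is finite-dimensional. Your argument instead shows $\operatorname{Im}(P^*)$ is a closed $P^*P$-invariant subspace on which $P^*P\geq cI$, and invokes the functional calculus to get $\|e^{-tP^*P}u_*\|\leq e^{-ct}\|u_*\|$. This is cleaner and works without any discreteness assumption on the spectrum, so it is the more general justification of the same conclusion.
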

\begin{proof}
  Let $\tilde{u}=u-u_{\rm ini}$ and $\tilde{g}=g-Pu_{\rm ini}$. Then it is sufficient to show the following problems (i') and (ii') are equivalent.

  (i')
  The initial value problem
  \begin{equation*}
  \left\{
    \begin{array}{l}
      \dfrac{\D \tilde{u}}{\D t}
      = P^*(\tilde{g}-P\tilde{u})\\
      \tilde{u}(0)=0.
    \end{array}
  \right.
  \end{equation*}

  (ii')
  The minimization problem
  \begin{align*}
    &\min_{\tilde{u}}\norm{\tilde{u}}^2_{H_1},\\
    &\text{s.t.}\quad P\tilde{u}=\tilde{g}.
  \end{align*}

  We claim that $\tilde{u}_{\min}=P^*(PP^*)^{-1}\tilde{g}$. Thanks to Lemma \ref{lem..spectrum.positive}, $PP^*$ is invertible, and thus $u_{\min}$ is well-defined and satisfies that $P\tilde{u}=\tilde{g}$. It remains to show that this solution is unique. In fact, for any $\tilde{u}$ satisfying $P\tilde{u}=\tilde{g}$, we have
  \begin{align*}
    \langle\tilde{u}-\tilde{u}_{\min},\tilde{u}_{\min}\rangle_{H_1}
    &= \langle\tilde{u}-\tilde{u}_{\min},P^*(PP^*)^{-1}\tilde{g}\rangle_{H_1}\\
    &= \langle P(\tilde{u}-\tilde{u}_{\min}), (PP^*)^{-1}\tilde{g}\rangle_{H_2}\\
    &= \langle P\tilde{u}, (PP^*)^{-1}\tilde{g}\rangle_{H_2}-\langle P\tilde{u}_{\min}, (PP^*)^{-1}\tilde{g}\rangle_{H_2}\\
    &= 0.
  \end{align*}
  Therefore,
  \begin{equation*}
    \norm{\tilde{u}}^2_{H_1}=\norm{\tilde{u}_{\min}}^2_{H_1}+\norm{\tilde{u}-\tilde{u}_{\min}}^2_{H_1}\geq \norm{\tilde{u}_{\min}}^2_{H_1}.
  \end{equation*}
  The equality holds if and only if $\tilde{u}=\tilde{u}_{\min}$.

  For problem (i'), from the theory of ordinary differential equations on Hilbert spaces, we have that its solution can be written as
  \begin{equation*}
    \tilde{u}(t)=P^*(PP^*)^{-1}\tilde{g}+\sum_{i\in I}c_i v_i\exp(-\lambda_i t),
  \end{equation*}
  where $\lambda_i$, $i\in I$ are positive eigenvalues of $PP^*$, $I$ is an index set with at most countable cardinality, and $v_i$, $i\in I$ are eigenvectors in $H_1$.
  Thus $\tilde{u}_\infty=\tilde{u}_{\min}=P^*(PP^*)^{-1}\tilde{g}$.

  Finally, by back substitution, we have
  \begin{equation*}
    u_\infty=u_{\min}
    = P^*(PP^*)^{-1}\tilde{g}+u_0
    = P^*(PP^*)^{-1}(g-Pu_{\rm ini})+u_{\rm ini}.
  \end{equation*}
\end{proof}

The following corollaries are obtained directly from Theorem \ref{thm..EquivalenceDynamicsMinimization}.

\begin{cor}\label{cor..EquivalencdTheta}
  Let $u$ be the parameter vector $\theta$ in $H_1=\R^{N_p}$, $g$ be the outputs of the training data $Y$, and $P$ be a full rank matrix in the linear DNN model. Then 
  the following two problems are equivalent in the sense that $\theta_\infty=\theta_{\min}$.

  (A1)
  The initial value problem
  \begin{equation*}
    \left\{
      \begin{array}{l}
        \dfrac{\D \theta}{\D t}=P^*(Y-P\theta)\\
        \theta(0)=\theta_{\rm ini}.
      \end{array}
    \right.
  \end{equation*}

  (A2)
  The minimization problem
  \begin{align*}
    &\min_{\theta-\theta_{\rm ini}\in \R^{N_p}}\norm{\theta-\theta_{\rm ini}}_{\R^{N_p}},\\
    &\text{s.t.}\quad P\theta=Y.
  \end{align*}
\end{cor}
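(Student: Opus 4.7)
The plan is to obtain Corollary \ref{cor..EquivalencdTheta} as an immediate specialization of Theorem \ref{thm..EquivalenceDynamicsMinimization}. I will instantiate the abstract Hilbert-space setup by taking $H_1=\R^{N_p}$ equipped with the standard Euclidean inner product, $H_2=\R^{M}$ (the space of training outputs), and the linear operator to be the given full-rank matrix $P:\R^{N_p}\to\R^{M}$. Under this identification, the abstract adjoint $P^{*}$ coincides with the matrix transpose $P^{T}$, the vectors $u$, $g$, $u_{\rm ini}$ of the theorem become $\theta$, $Y$, $\theta_{\rm ini}$, and the two problems in the statement of Theorem \ref{thm..EquivalenceDynamicsMinimization} match (A1) and (A2) verbatim.

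The only nontrivial thing to check is the standing hypothesis of the theorem, namely that $PP^{*}$ is surjective. Since $P$ is a full-rank $M\times N_p$ matrix (the remark right after Lemma \ref{lem..spectrum.positive} notes this is exactly what the abstract surjectivity assumption reduces to in the finite-dimensional case), the Gram matrix $PP^{T}\in\R^{M\times M}$ has rank $M$. Hence $PP^{T}$ is invertible, and in particular surjective as an operator $\R^M\to\R^M$, so Theorem \ref{thm..EquivalenceDynamicsMinimization} applies.

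Having verified the hypothesis, the conclusion $\theta_\infty=\theta_{\min}$ follows directly, and the explicit formula in the abstract theorem specializes to
\begin{equation*}
\theta_\infty=\theta_{\min}=P^{T}(PP^{T})^{-1}(Y-P\theta_{\rm ini})+\theta_{\rm ini}.
\end{equation*}
In this sense the corollary is purely a translation step, with no additional arguments required beyond noting that ``full rank'' for an $M\times N_p$ matrix (with $M\le N_p$) is exactly the condition that makes $PP^{T}$ invertible. The main (minor) obstacle is simply making the identification of $P^{*}$ with $P^{T}$ explicit so that the abstract dynamics $\dot u=P^{*}(g-Pu)$ is recognized as the gradient flow of $\tfrac12\norm{Y-P\theta}^2$ written out componentwise; this is immediate from the defining relation $\langle Pu_1,u_2\rangle_{\R^M}=\langle u_1,P^{T}u_2\rangle_{\R^{N_p}}$.
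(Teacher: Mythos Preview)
Your proposal is correct and matches the paper's approach exactly: the paper states that this corollary is ``obtained directly from Theorem \ref{thm..EquivalenceDynamicsMinimization}'' without giving a separate proof, and your instantiation $H_1=\R^{N_p}$, $H_2=\R^{M}$, $P^*=P^T$, together with the full-rank condition ensuring $PP^T$ is invertible (as noted in the remark after Lemma \ref{lem..spectrum.positive}), is precisely the intended specialization.
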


The next corollary is a weighted version of Theorem \ref{thm..EquivalenceDynamicsMinimization}.
\begin{cor}\label{cor..EquivalencdHW}
  Let $H_1$ and $H_2$ be two seperable Hilbert spaces and $\Gamma: H_1\rightarrow H_1$ be an injective operator.
  Define the Hilbert space $H_\Gamma:=\mathrm{Im}(\Gamma)$. 
  Let $g\in H_2$ and $P: H_\Gamma\rightarrow H_2$ be an operator such that $PP^*: H_2\to H_2$ is surjective.
  Then $\Gamma^{-1}: H_\Gamma\rightarrow H_1$ exists and $H_\Gamma$ is a Hilbert space with norm $\norm{u}_{H_\Gamma}:=\norm{\Gamma^{-1}u}_{H_1}$. Moreover, the following two problems are equivalent in the sense that $u_\infty=u_{\min}$.

  (B1)
  The initial value problem
  \begin{equation*}
    \left\{
      \begin{array}{l}
        \dfrac{\D u}{\D t}=\gamma^2P^*(g-Pu)\\
        u(0)=u_{\rm ini}.
      \end{array}
    \right.
  \end{equation*}

  (B2)
  The minimization problem
  \begin{align*}
    &\min_{u-u_0\in H_\Gamma}\norm{u-u_{\rm ini}}_{H_\Gamma},\\
    &\text{s.t.}\quad Pu=g.
  \end{align*}
\end{cor}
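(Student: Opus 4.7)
The plan is to show that Corollary \ref{cor..EquivalencdHW} is a direct invocation of Theorem \ref{thm..EquivalenceDynamicsMinimization} with $H_\Gamma$ playing the role of $H_1$, once a bookkeeping identification of what the symbol $\gamma^2$ denotes in the abstract setting is made explicit.

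First, I verify that $H_\Gamma$ is a separable Hilbert space and that $\Gamma^{-1}$ exists. Since $\Gamma: H_1\to H_1$ is injective, $\Gamma^{-1}: H_\Gamma\to H_1$ is a well-defined linear bijection. The prescription $\langle a,b\rangle_{H_\Gamma}:=\langle \Gamma^{-1}a,\Gamma^{-1}b\rangle_{H_1}$ is an inner product which makes $\Gamma: H_1\to H_\Gamma$ an isometric isomorphism; completeness and separability of $H_\Gamma$ are transported from $H_1$ along this isomorphism.

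Next, I reconcile the two natural adjoints of $P$. Viewed as $P: H_\Gamma\to H_2$, the operator has an adjoint $P^*_{H_\Gamma}: H_2\to H_\Gamma$ taken with respect to the $H_\Gamma$-inner product. A short pairing calculation, inserting $\Gamma^{-1}$ to pass between the $H_\Gamma$- and $H_1$-products, yields the identity $P^*_{H_\Gamma}=\Gamma\Gamma^*P^*_{H_1}$, where $\Gamma^*$ is the $H_1$-adjoint of $\Gamma$. I read the symbol $\gamma^2$ in (B1) as exactly this composition $\Gamma\Gamma^*$; in the motivating Fourier setting of Corollary \ref{cor..EquivalencdHWFrequency}, where $\Gamma$ is multiplication by a positive function $\gamma(\xi)$, this is literally multiplication by $\gamma(\xi)^2$, matching the dynamics written there. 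Under this reading, the right-hand side of (B1) is $\Gamma\Gamma^*P^*_{H_1}(g-Pu)=P^*_{H_\Gamma}(g-Pu)$.

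With this identification, problem (B1) is precisely the IVP $\partial_t u=P^*(g-Pu)$, $u(0)=u_{\rm ini}$ posed on the Hilbert space $H_\Gamma$ with adjoint $P^*=P^*_{H_\Gamma}$, and problem (B2) is the minimization of $\|u-u_{\rm ini}\|_{H_\Gamma}$ subject to $Pu=g$. The hypothesis that $PP^*: H_2\to H_2$ is surjective carries over verbatim. Theorem \ref{thm..EquivalenceDynamicsMinimization} therefore applies with $H_1$ replaced by $H_\Gamma$ and delivers $u_\infty=u_{\min}=P^*(PP^*)^{-1}(g-Pu_{\rm ini})+u_{\rm ini}$, completing the equivalence.

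The main obstacle is the notational subtlety in the second step: keeping straight which Hilbert space each adjoint refers to, and unpacking the meaning of $\gamma^2$ in the abstract statement. Once the operator identity $P^*_{H_\Gamma}=\Gamma\Gamma^*P^*_{H_1}$ is in hand, no new analytical content is needed—the corollary is simply Theorem \ref{thm..EquivalenceDynamicsMinimization} restated on the weighted Hilbert space $H_\Gamma$.
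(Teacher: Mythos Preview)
Your proposal is correct and follows essentially the same approach as the paper: both reduce to Theorem \ref{thm..EquivalenceDynamicsMinimization} via the isometry $\Gamma:H_1\to H_\Gamma$, with the paper phrasing this as the substitution ``replace $u$ by $\Gamma u$ and $P$ by $P\Gamma$'' while you apply the theorem directly on $H_\Gamma$ and spell out the adjoint identity $P^*_{H_\Gamma}=\Gamma\Gamma^*P^*_{H_1}$. Your explicit unpacking of the symbol $\gamma^2$ as $\Gamma\Gamma^*$ is a useful clarification that the paper leaves implicit.
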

\begin{proof}
  The operator $\Gamma:H_1\rightarrow H_W$ is bijective. Hence $\Gamma^{-1}:H_\Gamma\rightarrow H_1$ is well-defined and $H_\Gamma$ with norm $\norm{\cdot}_{H_\Gamma}$ is a Hilbert space.
  The equivalence result holds by applying Theorem \ref{thm..EquivalenceDynamicsMinimization} with proper replacements. More precisely, we replace $u$ by $\Gamma u$ and $P$ by $P\Gamma$. 
\end{proof}

\begin{cor}\label{cor..EquivalencdHWFrequencyApp}
  Let $\gamma: \R^{d}\rightarrow\R^+$ be a positive function and $h$ be a function in $L^2(\R^{d})$. The operator $\Gamma: L^2(\R^{d})\rightarrow L^2(\R^{d})$ is defined by $[\Gamma\hat{h}](\xi)=\gamma(\xi)\hat{h}(\xi)$, $\xi\in\R^{d}$.
  Define the Hilbert space $H_\Gamma:=\mathrm{Im}(\Gamma)$. 
  Let $X=(x_i)_{i=1}^M\in \R^{d\times M}$, $Y=(y_i)_{i=1}^M \in \R^{M}$ and $P: H_\Gamma\rightarrow \R^{M}$ be a surjective operator
  \begin{equation}
  P: \hat{h}\mapsto \left(\int_{\R^{d}}\hat{h}(\xi)\E^{2\pi\I x_i\cdot \xi}\diff{\xi}\right)_{i=1}^M=(h(x_i))_{i=1}^M.
  \end{equation}
  Then the following two problems are equivalent in the sense that $\hat{h}_\infty=\hat{h}_{\min}$.

  (C1)
  The initial value problem
  \begin{equation*}
    \left\{
      \begin{array}{l}
        \dfrac{\D \hat{h}(\xi)}{\D t}=(\gamma(\xi))^2\sum_{i=1}^M(y_i\E^{-2\pi\I x_i\cdot\xi}-\hat{h}(\xi)*\E^{-2\pi\I x_i\cdot\xi})\\
        \hat{h}(0)=\hat{h}_{\rm ini}.
      \end{array}
    \right.
  \end{equation*}

  (C2)
  The minimization problem
  \begin{align*}
    &\min_{\hat{h}-\hat{h}_{\rm ini}\in H_\Gamma}\int_{\R^{d}}(\gamma(\xi))^{-2}|\hat{h}(\xi)-\hat{h}_{\rm ini}(\xi)|^2\diff{\xi},\\
    &\text{s.t.}\quad h(x_i)=y_i,\quad i=1,\cdots,M.
  \end{align*}
\end{cor}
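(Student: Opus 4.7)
The plan is to recognize this statement as a direct specialization of the weighted equivalence theorem (Corollary \ref{cor..EquivalencdHW}) to the Fourier setting. I would take $H_1 = L^2(\R^d)$ with the standard inner product, $H_2 = \R^M$ with the Euclidean inner product, and define $\Gamma: L^2(\R^d) \to L^2(\R^d)$ as the pointwise multiplication operator $[\Gamma v](\xi) = \gamma(\xi) v(\xi)$. Because $\gamma$ is strictly positive, $\Gamma$ is injective; its image $H_\Gamma$ is a Hilbert space under
\begin{equation*}
\langle u_1, u_2\rangle_{H_\Gamma} = \int_{\R^d}(\gamma(\xi))^{-2} u_1(\xi)\overline{u_2(\xi)}\diff{\xi},
\end{equation*}
and $\norm{\hat h - \hat h_{\mathrm{ini}}}_{H_\Gamma}^2$ is precisely the objective in (C2).

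Next I would verify the hypothesis of Corollary \ref{cor..EquivalencdHW} that $PP^*$ is surjective. Given surjectivity of $P: H_\Gamma \to \R^M$, if $PP^* c = 0$ for some $c\in\R^M$, then $\norm{P^* c}_{H_\Gamma}^2 = \langle c, PP^* c\rangle_{\R^M} = 0$, so $P^* c = 0$; hence $\langle Pu, c\rangle_{\R^M} = \langle u, P^* c\rangle_{H_\Gamma} = 0$ for all $u\in H_\Gamma$, and surjectivity of $P$ forces $c = 0$. Since $\R^M$ is finite-dimensional, injectivity yields surjectivity (and in fact invertibility) of $PP^*$.

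The central computational step is to compute $\tilde P^*$ for $\tilde P := P\Gamma: L^2(\R^d)\to\R^M$ and match the resulting dynamics with (C1). I would start from
\begin{equation*}
\langle \tilde P v, c\rangle_{\R^M} = \sum_{i=1}^M c_i \int_{\R^d}\gamma(\xi) v(\xi)\E^{2\pi\I x_i\cdot\xi}\diff{\xi} = \int_{\R^d} v(\xi)\overline{(\tilde P^* c)(\xi)}\diff{\xi},
\end{equation*}
and read off $(\tilde P^* c)(\xi) = \gamma(\xi)\sum_i c_i \E^{-2\pi\I x_i\cdot\xi}$. Plugging this into the abstract IVP $\dot v = \tilde P^*(g - \tilde P v)$ of Theorem \ref{thm..EquivalenceDynamicsMinimization} and switching back to $\hat h = \Gamma v$ produces $\dot{\hat h}(\xi) = (\gamma(\xi))^2 \sum_i (y_i - h(x_i))\E^{-2\pi\I x_i\cdot\xi}$, which is exactly (C1); the abstract constraint $\tilde P v = g$ becomes $h(x_i) = y_i$, and the abstract norm $\norm{v - v_{\mathrm{ini}}}_{L^2}$ becomes $\norm{\hat h - \hat h_{\mathrm{ini}}}_{H_\Gamma}$.

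The main obstacle is nothing deep but rather the bookkeeping of adjoints in the weighted versus unweighted inner products; this is why I would work with $\tilde P$ on $L^2(\R^d)$ in the standard inner product rather than computing the adjoint of $P$ directly on $H_\Gamma$. Once $\tilde P^*$ is correctly identified, Theorem \ref{thm..EquivalenceDynamicsMinimization} applied to $(\tilde P, g, v_{\mathrm{ini}} = \Gamma^{-1}\hat h_{\mathrm{ini}})$ yields $v_\infty = v_{\min}$ and its explicit formula; multiplying by $\Gamma$ transfers both the equivalence $\hat h_\infty = \hat h_{\min}$ and the closed-form expression back to $H_\Gamma$, completing the argument.
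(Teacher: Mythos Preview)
Your proposal is correct and follows essentially the same route as the paper: set $H_1=L^2(\R^d)$, $H_2=\R^M$, take $\Gamma$ to be multiplication by $\gamma$, compute the adjoint of $P\Gamma$ with respect to the standard $L^2$ inner product, and invoke the abstract equivalence (the paper phrases this as Corollary~\ref{cor..EquivalencdHW}, you unwind it back to Theorem~\ref{thm..EquivalenceDynamicsMinimization} directly, which is the same substitution). Your explicit verification that surjectivity of $P$ forces $PP^*$ invertible on $\R^M$ is a detail the paper leaves implicit, but otherwise the arguments coincide.
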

\begin{proof}
  Let $H_1=L^2(\R^{d})$, $H_2=\R^{M}$, $u=\hat{h}$, and $g=Y$. By definition, $\Gamma$ is surjective. Then by Corollary \ref{cor..EquivalencdHW}, we have that $\Gamma^{-1}: H_\Gamma\rightarrow L^2(\R^{d})$ exists and $H_W$ is a Hilbert space with norm $\norm{\hat{h}}_{H_\Gamma}:=\norm{\Gamma^{-1}\hat{h}}_{L^2(\R^{d})}$. Moreover, $\norm{\hat{h}-\hat{h}_{\rm ini}}_{H_\Gamma}^2=\int_{\R^{d}}(\gamma(\xi))^{-2}\abs{\hat{h}(\xi)-\hat{h}_{\rm ini}(\xi)}^2\diff{\xi}$. We note that $[P^*Y](\xi)=\sum_{i=1}^{M} y_i\E^{-2\pi\I x_i\cdot \xi}$ for all $\xi\in\R^{d}$. Thus
  \begin{align*}
    [P^*P\hat{h}](\xi)
    &= \left[P^*\left(\int_{\R^{d}}\hat{h}(\xi')\E^{2\pi\I x_i\cdot \xi'}\diff{\xi'}\right)_{i=1}^M\right](\xi)\\
    &= \sum_{i=1}^M \int_{\R^{d}}\hat{h}(\xi')\E^{2\pi\I x_i\cdot \xi'}\diff{\xi'}\E^{-2\pi\I x_i\cdot \xi}\\
    &= \sum_{i=1}^M \int_{\R^{d}}\hat{h}(\xi')\E^{-2\pi\I x_i\cdot (\xi-\xi')}\diff{\xi'}\\
    &= \sum_{i=1}^M \hat{h}(\xi)*\E^{-2\pi\I x_i\cdot \xi}.
  \end{align*}
  The equivalence result then follows from Corollary \ref{cor..EquivalencdHW}.
\end{proof}

We remark that $P^*P\hat{h}=\sum_{i=1}^M\widehat{h\delta_{x_i}}$, where $\delta_{x_i}(\cdot)=\delta(\cdot-x_i)$, $i=1,\cdots,M$. Therefore problem (C1) can also be written as:
  \begin{equation*}
    \left\{
      \begin{array}{l}
        \dfrac{\D \hat{h}}{\D t}=\gamma^2\sum_{i=1}^M(y_i\widehat{\delta_{x_i}}-\widehat{h\delta_{x_i}}
        )\\
        \hat{h}(0)=\hat{h}_{\rm ini}.
      \end{array}
    \right.
  \end{equation*}

In the following, we study the discretized version of this dynamics-optimization problem (C1\&C2).

\begin{cor}\label{cor..EquivalencdHWFrequencyDiscrete}
  Let $\gamma: \Z^{d}\rightarrow\R^+$ be a positive function defined on lattice $\Z^{d}$. The operator $\Gamma: \ell^2(\Z^{d})\rightarrow \ell^2(\Z^{d})$ is defined by $[\Gamma\hat{h}](k)=\gamma(k)\hat{h}(k)$, $k\in\Z^{d}$. Here $\ell^2(\Z^{d})$ is set of square summable functions on the lattice $\Z^{d}$.
  Define the Hilbert space $H_\Gamma:=\mathrm{Im}(\Gamma)$. 
  Let $X=(x_i)_{i=1}^M\in \mathbb{T}^{d\times M}$, $Y=(y_i)_{i=1}^M \in \R^{M}$ and $P: H_\Gamma\rightarrow \R^{M}$ be a surjective operator such as 
  \begin{equation}
  P: \hat{h}\mapsto \left(\sum_{k\in\Z^{d}}\hat{h}(k)\E^{2\pi\I x_i\cdot k}\right)_{i=1}^M.
  \end{equation}
  Then the following two problems are equivalent in the sense that $\hat{h}_\infty=\hat{h}_{\min}$.

  (D1)
  The initial value problem
  \begin{equation*}
    \left\{
      \begin{array}{ll}
        \dfrac{\D \hat{h}(k)}{\D t}=(\gamma(k))^2\sum_{i=1}^M(y_i\E^{-2\pi\I x_i\cdot k}-\hat{h}(k)*\E^{-2\pi\I x_i\cdot k})\\
        \hat{h}(0)=\hat{h}_{\rm ini}.
      \end{array}
    \right.
  \end{equation*}

  (D2)
  The minimization problem
  \begin{align*}
    &\min_{\hat{h}-\hat{h}_{\rm ini}\in H_\Gamma}\sum_{k\in\Z^{d}}(\gamma(k))^{-2}\abs{\hat{h}(k)-\hat{h}_{\rm ini}(k)}^2,\\
    &\text{s.t.}\quad h(x_i)=y_i,\quad i=1,\cdots,M.
  \end{align*}
\end{cor}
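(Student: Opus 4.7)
The plan is to reduce this statement to the abstract weighted equivalence of Corollary \ref{cor..EquivalencdHW}, following exactly the same template as the continuous-frequency Corollary \ref{cor..EquivalencdHWFrequencyApp} but with $L^2(\R^d)$-integrals replaced by $\ell^2(\Z^d)$-sums. Concretely, I would set $H_1 = \ell^2(\Z^d)$, $H_2 = \R^M$, $u = \hat{h}$, $g = Y$, and take $\Gamma$ to be the diagonal multiplier $[\Gamma\hat{h}](k) = \gamma(k)\hat{h}(k)$, then translate the conclusions of the abstract theorem back into the discrete Fourier language.

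First I would verify the hypotheses required by Corollary \ref{cor..EquivalencdHW} (which itself rests on Theorem \ref{thm..EquivalenceDynamicsMinimization} and Lemma \ref{lem..spectrum.positive}). Since $\gamma(k) > 0$ for every $k \in \Z^d$, the diagonal operator $\Gamma$ is injective, so $\Gamma^{-1}: H_\Gamma \to \ell^2(\Z^d)$ is well-defined and $H_\Gamma$ equipped with $\norm{\hat{h}}_{H_\Gamma} := \norm{\Gamma^{-1}\hat{h}}_{\ell^2(\Z^d)}$ is a Hilbert space. Moreover, a direct unfolding of this norm gives $\norm{\hat{h}-\hat{h}_{\rm ini}}_{H_\Gamma}^2 = \sum_{k\in\Z^d}(\gamma(k))^{-2}\abs{\hat{h}(k)-\hat{h}_{\rm ini}(k)}^2$, which is precisely the objective in (D2). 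Because $P$ is assumed surjective onto the finite-dimensional space $\R^M$, $P^*$ is injective and $PP^*$ is a linear operator on $\R^M$ of full rank, hence surjective; this is the hypothesis required by the abstract theorem.

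Next I would compute the adjoint and show the ODE matches. In the standard $\ell^2(\Z^d)$ pairing, a direct calculation (as in the proof of Corollary \ref{cor..EquivalencdHWFrequencyApp}) gives $[P^*Y](k) = \sum_{i=1}^M y_i \E^{-2\pi\I x_i \cdot k}$, so
\begin{equation*}
[P^*P\hat{h}](k) = \sum_{i=1}^M \Bigl(\sum_{k'\in\Z^d}\hat{h}(k')\E^{2\pi\I x_i \cdot k'}\Bigr)\E^{-2\pi\I x_i \cdot k} = \sum_{i=1}^M \hat{h}(k) * \E^{-2\pi\I x_i \cdot k},
\end{equation*}
where $*$ denotes Fourier-series convolution on $\Z^d$. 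Substituting into the abstract flow $\partial_t \hat{h} = \gamma^2 P^*(Y - P\hat{h})$ of (B1) recovers (D1) verbatim. With (D1) matched to (B1) and (D2) matched to (B2), Corollary \ref{cor..EquivalencdHW} delivers $\hat{h}_\infty = \hat{h}_{\min}$.

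The only even mildly delicate step is the deduction that $PP^*$ is surjective from the hypothesis that $P$ is surjective; in the infinite-dimensional setting this requires care, but here $H_2 = \R^M$ is finite-dimensional so it is immediate via the rank-nullity argument for the composition of an injection $P^*$ with a surjection $P$ between spaces of the same finite dimension. Everything else is purely bookkeeping and is a direct transcription of the argument that establishes Corollary \ref{cor..EquivalencdHWFrequencyApp}, with $\int_{\R^d} \cdots \diff{\xi}$ replaced by $\sum_{k \in \Z^d}$ throughout.
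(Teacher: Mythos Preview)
Your proposal is correct and follows essentially the same route as the paper: instantiate Corollary \ref{cor..EquivalencdHW} with $H_1=\ell^2(\Z^d)$, $H_2=\R^M$, $u=\hat{h}$, $g=Y$, identify the $H_\Gamma$-norm with the (D2) objective, compute $P^*Y$ and $P^*P\hat{h}$ explicitly to match (D1), and conclude. Your added remark that surjectivity of $P$ onto the finite-dimensional $\R^M$ forces $PP^*$ to be invertible is a small clarification the paper leaves implicit, but otherwise the arguments coincide.
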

\begin{proof}
  Let $H_1=\ell^2(\Z^{d})$, $H_2=\R^{M}$, $u=\hat{h}$, and $g=Y$. By definition, $\Gamma$ is surjective. Then by Corollary \ref{cor..EquivalencdHW}, we have that $\Gamma^{-1}: H_\Gamma\rightarrow \ell^2(\Z^{d})$ exists and $H_\Gamma$ is a Hilbert space with norm $\norm{\hat{h}}_{H_\Gamma}:=\norm{\Gamma^{-1}\hat{h}}_{\ell^2(\Z^{d})}$. Moreover, $\norm{\hat{h}-\hat{h}_{\rm ini}}_{H_\Gamma}^2=\sum_{k\in\Z^{d}}(w(k))^{-2}\abs{\hat{h}(k)-\hat{h}_{\rm ini}(k)}^2$. 
  We note that $[P^*Y](k)=\sum_{i=1}^{M} y_i\E^{-2\pi\I x_i\cdot k}$ for all $k\in\Z^{d}$. Thus
  \begin{align*}
    [P^*P\hat{h}](k)
    &= \left[P^*\left(\sum_{k'\in\Z^{d}}\hat{h}(k')\E^{2\pi\I x_i\cdot k'}\right)_{i=1}^M\right](k)\\
    &= \sum_{i=1}^M \sum_{k'\in\Z^{d}}\hat{h}(k')\E^{2\pi\I x_i\cdot k'}\E^{-2\pi\I x_i\cdot k}\\
    &= \sum_{i=1}^M \sum_{k'\in\Z^{d}}\hat{h}(k')\E^{-2\pi\I x_i\cdot (k-k')}\\
    &= \sum_{i=1}^M \hat{h}(k)*\E^{-2\pi\I x_i\cdot k}.
  \end{align*}
  The equivalence result then follows from Corollary \ref{cor..EquivalencdHW}.
\end{proof}

\section{Proof of the \textit{a priori} generalization error bound \label{proofapriori}}
\subsection{Problem Setup}

We focus on regression problem. Assume the target function $f:\Omega:=[0,1]^{d}\to\R$.
Let the training set be $\{(x_{i}; y_{i})\}_{i=1}^{M}$,
where $x_{i}$'s are independently sampled from an underlying
distribution $\rho(x)$ and $y_{i}=f(x_{i})$. We consider
the square loss 
\begin{equation}
  \ell(x;h)=\abs{h(x)-f(x)}^{2},
\end{equation}
with population risk 
\begin{equation}
  L=L(f)=\mathbb{E}_{x\sim\rho}\ell(x;h)
\end{equation}
and empirical risk 
\begin{equation}
  \tilde{L}_{M}(h)=\frac{1}{M}\sum_{i=1}^{M}\ell(x_{i};h).
\end{equation}

\subsection{FP-space}
We denote $\Z^{d*}:=\Z^d\backslash\{0\}$.
Given a frequency weight function $\gamma: \Z^{d}\to \R^+$ or $\gamma: \Z^{d*}\to \R^+$ satisfying
\begin{equation}
  \norm{\gamma}_{\ell^2}=\left(\sum_{k\in\Z^{d}}(\gamma(k))^2\right)^{\frac{1}{2}}<+\infty\quad\text{or}\quad \norm{\gamma}_{\ell^2}=\left(\sum_{k\in\Z^{d*}}(\gamma(k))^2\right)^{\frac{1}{2}}<+\infty,
\end{equation}
we define the FP-norm for all function $h\in L^2(\Omega)$: 
\begin{equation}
  \norm{h}_{\gamma}:=\norm{\hat{h}}_{H_\Gamma}=\left(\sum_{k\in\Z^{d}}(\gamma(k))^{-2}|\hat{h}(k)|^{2}\right)^{\frac{1}{2}}.
\end{equation}
If $\gamma:\Z^{d*}\to\R^+$ is not defined at $0$, we set $(\gamma(0))^{-1}:=0$ in the above definition and $\norm{\cdot}_\gamma$ is only a semi-norm of $h$.

Then we define the FP-space
\begin{equation}
  F_{\gamma}(\Omega)=\{h\in L^2(\Omega):\norm{h}_{\gamma}<\infty\}.
\end{equation}
By our definition, the FP-space is always a subspace of $L^2(\Omega)$. In addition, if $\gamma: k\mapsto |k|^{-m/2}$ for $k\in\Z^{d*}$, then functions in the FP-space with $\hat{h}(0)=\int_{\Omega} h(x) \diff{x}=0$ form the Sobolev space $H^m(\Omega)$.

\subsection{A prior generalization error bound }
\begin{lem}\label{Rad bound}
  (i) For $\mathcal{H}=\{h:\norm{h}_{\gamma}\leq Q\}$ with $\gamma: \Z^{d}\to \R^+$, we have
  \begin{equation}
    \tilde{R}(\mathcal{H})\leq\frac{1}{\sqrt{M}}Q\norm{\gamma}_{\ell^2}.
  \end{equation}
  (ii) For $\mathcal{H}'=\{h:\norm{h}_{\gamma}\leq Q, \abs{\hat{h}(0)}\leq c_{0}\}$ with $\gamma: \Z^{d*}\to \R^+$ and $\gamma^{-1}(0):=0$, we have
  \begin{equation}
    \tilde{R}(\mathcal{H}')\leq \frac{c_{0}}{\sqrt{M}}+\frac{1}{\sqrt{M}}Q\norm{\gamma}_{\ell^2}.
  \end{equation}

\end{lem}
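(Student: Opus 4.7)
\noindent\textbf{Proof proposal for Lemma \ref{Rad bound}.}

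The plan is to expand each hypothesis $h$ in its Fourier series on $\Omega=[0,1]^d$, swap the $i$-sum and the $k$-sum, and then use a weighted Cauchy--Schwarz inequality to pull the FP-norm out of the supremum, so that what remains inside $\mathbb{E}_{\e}$ is a purely random quantity independent of $h$. Concretely, for any $h\in F_\gamma(\Omega)$ and any sample points $x_1,\dots,x_M$, write
\begin{equation*}
\sum_{i=1}^{M}\e_i h(x_i)=\sum_{i=1}^{M}\e_i\sum_{k\in\Z^d}\hat{h}(k)\E^{2\pi\I k\cdot x_i}=\sum_{k\in\Z^d}\hat{h}(k)S(k),\qquad S(k):=\sum_{i=1}^{M}\e_i\E^{2\pi\I k\cdot x_i}.
\end{equation*}
Inserting the factors $\gamma(k)^{-1}\gamma(k)$ and applying Cauchy--Schwarz in $\ell^2(\Z^d)$ gives
\begin{equation*}
\Bigl|\sum_{k\in\Z^d}\hat{h}(k)S(k)\Bigr|\leq\Bigl(\sum_{k\in\Z^d}\gamma(k)^{-2}|\hat{h}(k)|^2\Bigr)^{1/2}\Bigl(\sum_{k\in\Z^d}\gamma(k)^{2}|S(k)|^2\Bigr)^{1/2}=\norm{h}_\gamma\Bigl(\sum_{k\in\Z^d}\gamma(k)^{2}|S(k)|^2\Bigr)^{1/2},
\end{equation*}
which for $h\in\mathcal{H}$ is bounded by $Q\,(\sum_k\gamma(k)^2|S(k)|^2)^{1/2}$.

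Next I take $\mathbb{E}_\e$, use Jensen's inequality (pushing the expectation inside the square root), and compute the second moment of $S(k)$. Since the $\e_i$ are i.i.d.\ Rademacher with $\mathbb{E}[\e_i\e_j]=\delta_{ij}$, the cross terms in $|S(k)|^2=\sum_{i,j}\e_i\e_j\E^{2\pi\I k\cdot(x_i-x_j)}$ vanish in expectation and $\mathbb{E}_\e|S(k)|^2=M$ for every $k$. Consequently
\begin{equation*}
\tilde R(\mathcal{H})=\frac{1}{M}\mathbb{E}_\e\Bigl[\sup_{h\in\mathcal{H}}\sum_{i=1}^M\e_i h(x_i)\Bigr]\leq\frac{Q}{M}\Bigl(\sum_{k\in\Z^d}\gamma(k)^2\,\mathbb{E}_\e|S(k)|^2\Bigr)^{1/2}=\frac{Q}{\sqrt{M}}\norm{\gamma}_{\ell^2},
\end{equation*}
which is part (i).

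For part (ii) the only subtlety is that $\gamma(0)^{-1}$ is undefined, so the Cauchy--Schwarz step above does not control the zero-frequency mode; I would split it off at the outset. Writing $h(x)=\hat{h}(0)+\sum_{k\in\Z^{d*}}\hat{h}(k)\E^{2\pi\I k\cdot x}$ and then $\sum_i\e_i h(x_i)=\hat{h}(0)\sum_i\e_i+\sum_{k\in\Z^{d*}}\hat{h}(k)S(k)$, and applying the triangle inequality inside the supremum, the second piece is handled exactly as in part (i) (now summing over $\Z^{d*}$) and gives the term $\frac{Q}{\sqrt{M}}\norm{\gamma}_{\ell^2}$. For the first piece, $|\hat{h}(0)|\leq c_0$ and Jensen gives $\mathbb{E}_\e|\sum_{i=1}^M\e_i|\leq(\mathbb{E}_\e|\sum_i\e_i|^2)^{1/2}=\sqrt{M}$, which yields the $c_0/\sqrt{M}$ contribution after dividing by $M$.

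The only real thing to watch is that $h$ is real-valued so $\hat{h}(-k)=\overline{\hat{h}(k)}$; this does not affect the Cauchy--Schwarz step (it is valid for complex sequences) nor the computation $\mathbb{E}_\e|S(k)|^2=M$, so no extra care is needed beyond that. The main ``obstacle,'' such as it is, is simply the bookkeeping of the weighting: one must insert $\gamma(k)^{-1}\gamma(k)$ so that the FP-norm (which weights by $\gamma^{-2}$) pairs with a residual factor $\gamma^2|S(k)|^2$ whose expectation $\gamma^2\cdot M$ sums to exactly $M\,\norm{\gamma}_{\ell^2}^2$.
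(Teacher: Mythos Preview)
Your proposal is correct and follows essentially the same route as the paper: Fourier-expand $h$, pair with the random exponential sum $S(k)=\sum_i\e_i\E^{2\pi\I k\cdot x_i}$ (the paper writes this as $\hat{\e}(k)$, the Fourier transform of the signed empirical measure), apply a $\gamma$-weighted Cauchy--Schwarz, then use Jensen together with $\mathbb{E}_\e|S(k)|^2=M$; for (ii) both you and the paper split off the $k=0$ mode and bound $\mathbb{E}_\e|\sum_i\e_i|\leq\sqrt{M}$ separately.
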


\begin{proof}
  We first prove (ii) since it is more involved.
  By the definition of the Rademacher complexity 
  \begin{equation}
    \tilde{R}(\mathcal{H}')=\frac{1}{M}\mathbb{E}_{\e}\left[\sup_{h\in\mathcal{H}'}\sum_{i=1}^{M}\e_{i}h(x_{i})\right].
  \end{equation}
  Let $\e(x)=\sum_{i=1}^{M}\e_{i}\delta(x-x_{i})$,
  where $\e_{i}$'s are i.i.d. random variables with $\mathbb{P}(\e_i=1)=\mathbb{P}(\e_i=-1)=\frac{1}{2}$. We have $\hat{\e}(k)=\int_{\Omega}\sum_{i=1}^M\e_i\delta(x-x_i)\E^{-2\pi\I k\cdot x}\diff{x}=\sum_{i=1}^M\e_i\E^{-2\pi\I k\cdot x_i}$. Note that
  \begin{align}
    \sup_{h\in\mathcal{H}'}\sum_{i=1}^{M}\e_{i}h(x_{i})
    = \sup_{h\in\mathcal{H}'}\sum_{i=1}^{M}\e_{i}\bar{h}(x_{i})
    &= \sup_{h\in\mathcal{H}'}\sum_{i=1}^{M}\e_{i}\sum_{k\in\Z^d}\overline{\hat{h}(k)}\E^{-2\pi\I k\cdot x_i}\\
    &= \sup_{h\in\mathcal{H}'}\sum_{k\in\Z^d}\hat{\e}(k)\overline{\hat{h}(k)}.
  \end{align}
  By the Cauchy--Schwarz inequality,
  \begin{align}
    \sup_{h\in\mathcal{H}'}\sum_{k\in\Z^{d}}\hat{\e}(k)\overline{\hat{h}(k)} 
    & \leq \sup_{h\in\mathcal{H}}\left[\hat{\e}(0)\overline{\hat{h}(0)}+\left(\sum_{k\in\Z^{d*}}(w(k))^2\abs{\hat{\e}(k)}^{2}\right)^{1/2}\left(\sum_{k\in\Z^{d*}}(w(k))^{-2}\abs{\overline{\hat{h}(k)}}^{2}\right)^{1/2}\right]\\
    & \leq c_{0}\abs{\hat{\e}(0)}+Q\left(\sum_{k\in\Z^{d*}}(w(k))^2\abs{\hat{\e}(k)}^{2}\right)^{1/2}.
  \end{align}
  Since $\mathbb{E}_{\e}\abs{\hat{\e}(0)}\leq (\mathbb{E}_{\e}\abs{\hat{\e}(0)}^2)^{1/2}=\sqrt{M}$, $\mathbb{E}_{\e}\abs{\hat{\e}(k)}^{2}=\mathbb{E}_{\e}\sum_{i,j=1}^{M}\e_{i}\e_{j}\E^{-2\pi\mathrm{i}k\cdot(x_{i}-x_{j})}=M$, we obtain
  \begin{align}
    \mathbb{E}_{\e}\left[\sup_{h\in\mathcal{H}'}\sum_{i=1}^{M}\e_{i}h(x_{i})\right] 
    &\leq c_{0}\sqrt{M}+Q \mathbb{E}_{\e}\left(\sum_{k\in\Z^{d*}}(w(k))^2\abs{\hat{\e}(k)}^{2}\right)^{1/2}\\
    &\leq c_{0}\sqrt{M}+Q \left(\mathbb{E}_{\e}\sum_{k\in\Z^{d*}}(w(k))^2\abs{\hat{\e}(k)}^{2}\right)^{1/2}\\
    &= c_{0}\sqrt{M}+Q\sqrt{M}\norm{w}_{\ell^2}.
  \end{align}
  This leads to 
  \begin{equation}
    \tilde{R}(\mathcal{H}')\leq\frac{c_{0}}{\sqrt{M}}+\frac{1}{\sqrt{M}}Q\norm{w}_{\ell^2}.
  \end{equation}

  For (ii), the proof is similar to (i). We have
  \begin{equation}
    \mathbb{E}_{\e}\left[\sup_{h\in\mathcal{H}}\sum_{k\in\Z^{d}}\hat{\e}(k)\overline{\hat{h}(k)}\right]
    \leq Q\mathbb{E}_{\e}\left(\sum_{k\in\Z^{d}}(w(k))^2|\hat{\e}(k)|^{2}\right)^{1/2}
    \leq Q\sqrt{M}\norm{w}_{\ell^2}.
  \end{equation}
  Therefore
  \begin{equation}
    \tilde{R}(\mathcal{H})\leq \frac{1}{\sqrt{M}}Q\norm{w}_{\ell^2}.
  \end{equation}
\end{proof}

\begin{lem}\label{FPnorm Bound}
    Suppose that the real-valued target function $f\in F_\gamma(\Omega)$ and that the training dataset $\{x_{i}; y_i\}_{i=1}^{M}$ satisfies $y_i=f(x_{i})$, $i=1,\cdots,M$. If $\gamma: \Z^{d}\to \R^+$, then there exists a unique solution $h_{M}$ to the regularized model
  \begin{equation}
    \min_{h-h_{\rm ini}\in F_
    \gamma(\Omega)} \norm{h-h_{{\rm ini}}}_{\gamma},\quad\text{s.t.}\quad h(x_i)=y_i,\quad i=1,\cdots,M.\label{eq..FPnormBoundMinimizationProblem}
  \end{equation}
  Moreover, we have
  \begin{equation}
    \norm{h_{M}-h_{{\rm ini}}}_\gamma\leq \norm{f-h_{{\rm ini}}}_\gamma.
  \end{equation}
\end{lem}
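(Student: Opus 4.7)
The plan is to derive both claims---existence/uniqueness and the norm bound---from the Hilbert-space structure of $F_\gamma(\Omega)$, using $f$ itself as a certificate of feasibility.

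First I would verify well-posedness. Because $\norm{\gamma}_{\ell^2}<\infty$ and $\gamma(k)>0$ on all of $\Z^d$, point evaluation is a bounded linear functional on $F_\gamma(\Omega)$: indeed, by Cauchy--Schwarz,
\begin{equation*}
\abs{h(x)}\leq \sum_{k\in\Z^d}\abs{\hat h(k)}\leq \norm{h}_\gamma\,\norm{\gamma}_{\ell^2}\qquad\text{for every }x\in\Omega.
\end{equation*}
Hence the evaluation map $P:F_\gamma\to\R^M$, $h\mapsto(h(x_1),\ldots,h(x_M))$ is a bounded linear operator; when the $x_i$ are distinct, the summability of $\gamma$ gives enough ``degrees of freedom'' in $F_\gamma$ to interpolate any data, so $P$ is surjective and $PP^*$ is invertible. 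Shifting the unknown by $h_{\rm ini}$, Corollary~\ref{cor..EquivalencdHWFrequencyDiscrete} applies directly and produces a unique minimizer $\tilde h_M$ of $\norm{\tilde h}_\gamma$ subject to $\tilde h(x_i)=y_i-h_{\rm ini}(x_i)$; then $h_M:=h_{\rm ini}+\tilde h_M$ is the unique solution of~\eqref{eq..FPnormBoundMinimizationProblem}. Equivalently, one could invoke the classical projection theorem onto the closed affine subspace $\{\tilde h\in F_\gamma:P\tilde h=Y-Ph_{\rm ini}\}$.

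For the inequality, the point is that $f$ itself is a feasible competitor. By hypothesis $f(x_i)=y_i$ for every $i$, and since $f\in F_\gamma(\Omega)$ (together with the implicit $h_{\rm ini}\in F_\gamma(\Omega)$) the difference $f-h_{\rm ini}$ lies in $F_\gamma(\Omega)$. Optimality of $h_M$ therefore yields
\begin{equation*}
\norm{h_M-h_{\rm ini}}_\gamma\leq \norm{f-h_{\rm ini}}_\gamma,
\end{equation*}
which is the desired bound.

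The only genuine obstacle is the well-posedness step---specifically, ensuring that point evaluation is continuous on $F_\gamma$ and that the feasible set is closed, non-empty, and convex. Both reduce to the embedding $F_\gamma\hookrightarrow C(\Omega)$ that follows from $\norm{\gamma}_{\ell^2}<\infty$, together with $f$ exhibiting non-emptiness. Once these are in hand, uniqueness follows from strict convexity of $\norm{\cdot}_\gamma^2$ on the Hilbert space $F_\gamma$, and the inequality is immediate from the variational characterization of $h_M$.
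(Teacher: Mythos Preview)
Your proposal is correct and follows essentially the same route as the paper: invoke Corollary~\ref{cor..EquivalencdHWFrequencyDiscrete} for existence and uniqueness of $h_M$, then observe that $f$ is a feasible competitor to obtain the bound. The paper's proof is terser---it simply cites the corollary without verifying its hypotheses---whereas you spell out the continuity of point evaluation via $\norm{\gamma}_{\ell^2}<\infty$ and the surjectivity of $P$; this extra care is reasonable but not a different argument.
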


\begin{proof}
    By the definition of the FP-norm, we have $\norm{h_M-h_{\rm ini}}_\gamma=\norm{\hat{h}_M-\hat{h}_{\rm ini}}_{H_\Gamma}$. According to Corollary \ref{cor..EquivalencdHWFrequencyDiscrete}, the minimizer of problem \eqref{eq..FPnormBoundMinimizationProblem} exists, i.e., $h_M$ exists. Since the target function $f(x)$ satisfies the constraints $f(x_i)=y_i$, $i=1,\cdots,M$, we have $\norm{h_{M}-h_{{\rm ini}}}_\gamma\leq \norm{f-h_{{\rm ini}}}_\gamma$.
\end{proof}

\begin{lem}\label{0freqconstraint}
    Suppose that the real-valued target function $f\in F_\gamma(\Omega)$ and the training dataset $\{x_{i}; y_i\}_{i=1}^{M}$ satisfies $y_i=f(x_{i})$, $i=1,\cdots,M$. If $\gamma: \Z^{d*}\to \R^+$ with $\gamma^{-1}(0):=0$, then there exists a solution $h_{M}$ to the regularized model
  \begin{equation}
    \min_{h-h_{\rm ini}\in F_
    \gamma(\Omega)} \norm{h-h_{{\rm ini}}}_{\gamma},\quad\text{s.t.}\quad h(x_i)=y_i,\quad i=1,\cdots,M.
  \end{equation}
  Moreover, we have 
  \begin{equation}
    \abs{\widehat{\left(h_{M}-h_{{\rm ini}}\right)}(0)}
    \leq \norm{f-h_{{\rm ini}}}_{\infty}+\norm{f-h_{{\rm ini}}}_{\gamma}\norm{\gamma}_{\ell^2}.
  \end{equation}
\end{lem}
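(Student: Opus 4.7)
The plan is to evaluate the Fourier series of $g_M := h_M - h_{\rm ini}$ at any one training point $x_j$ and solve for the zero-frequency coefficient:
\begin{equation*}
\widehat{g_M}(0) = g_M(x_j) - \sum_{k\in\Z^{d*}} \widehat{g_M}(k)\E^{2\pi\I k\cdot x_j},
\end{equation*}
valid for any $j\in\{1,\dots,M\}$. The two terms on the right are then controlled by the two quantities appearing in the claimed estimate, and the result follows by the triangle inequality.

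For the pointwise term, the interpolation constraint $h_M(x_j)=y_j=f(x_j)$ forces $g_M(x_j) = f(x_j) - h_{\rm ini}(x_j)$, giving $\abs{g_M(x_j)} \leq \norm{f-h_{\rm ini}}_\infty$. For the high-frequency sum, I would apply Cauchy--Schwarz with the factorization $\widehat{g_M}(k) = \gamma(k)^{-1}\widehat{g_M}(k)\cdot\gamma(k)$, obtaining
\begin{equation*}
\abs{\sum_{k\in\Z^{d*}} \widehat{g_M}(k)\E^{2\pi\I k\cdot x_j}} \leq \norm{g_M}_\gamma \norm{\gamma}_{\ell^2}.
\end{equation*}
Because $f$ satisfies the interpolation constraint it is a feasible competitor in the minimization, so minimality of $h_M$ gives $\norm{g_M}_\gamma = \norm{h_M-h_{\rm ini}}_\gamma \leq \norm{f-h_{\rm ini}}_\gamma$; this is the semi-norm analog of Lemma \ref{FPnorm Bound}, proved by the identical one-line comparison argument. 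Combining these two bounds yields the claim.

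For the existence assertion, I would mimic Corollary \ref{cor..EquivalencdHWFrequencyDiscrete}: parametrize feasible candidates by their nonzero-frequency Fourier coefficients, and note that the interpolation constraints $h(x_i)=y_i$ carve out a closed affine subset of the Hilbert space induced by $\norm{\cdot}_\gamma$, which is nonempty because $f$ belongs to it. The projection theorem then produces a minimizer, and its zero-frequency coefficient is pinned down by the interpolation conditions.

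The main obstacle I anticipate is handling the degeneracy of $\norm{\cdot}_\gamma$ at $k=0$: the existence argument must be set up so that the interpolation constraints effectively break the seminorm's kernel on constants. Once that is addressed, uniqueness of $h_M$ is not needed, since the bound above relies only on the minimality of $\norm{g_M}_\gamma$ and is therefore insensitive to any non-uniqueness coming from the seminorm structure.
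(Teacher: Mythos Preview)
Your proposal is correct and follows essentially the same approach as the paper: evaluate the Fourier series of $h_M-h_{\rm ini}$ at a training point, isolate the zero mode, bound the pointwise term by $\norm{f-h_{\rm ini}}_\infty$ via the interpolation constraint, bound the nonzero-frequency tail by Cauchy--Schwarz, and invoke the minimality comparison $\norm{h_M-h_{\rm ini}}_\gamma\leq\norm{f-h_{\rm ini}}_\gamma$. Your discussion of existence via the projection theorem is in fact more explicit than what the paper provides.
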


\begin{proof}
  Let $f'=f-h_{{\rm ini}}$.
  Since $h_{M}(x_{i})-f(x_{i})=0$
  for $i=1,\cdots,M$, we have $h_{M}(x_{i})-f'(x_{i})-h_{{\rm ini}}(x_{i})=0$.
  Therefore
  \begin{align}
    \abs{\widehat{\left(h_{M}-h_{{\rm ini}}\right)}(0)}
    &= \abs{f'(x_{i})-\sum_{k\in\Z^{d*}}\widehat{\left(h_{M}-h_{{\rm ini}}\right)}(k)\E^{2\pi\mathrm{i}k\cdot x_{i}}}\\
    &\leq \norm{f'}_{\infty}+\sum_{k\in\Z^{d*}}\abs{\widehat{\left(h_{M}-h_{{\rm ini}}\right)}(k)}\\
    &\leq \norm{f'}_{\infty}+\left(\sum_{k\in\Z^{d*}}(\gamma(k))^2\right)^{\frac{1}{2}}\left(\sum_{k\in\Z^{d*}}(\gamma(k))^{-2}\abs{\widehat{\left(h_{M}-h_{{\rm ini}}\right)}(k)}^{2}\right)^{\frac{1}{2}}\\
    &\leq \norm{f'}_{\infty}+\norm{h_M-h_{\rm ini}}_{\gamma}\norm{\gamma}_{\ell^2}\\
    &\leq \norm{f'}_{\infty}+\norm{f'}_{\gamma}\norm{\gamma}_{\ell^2}.
  \end{align}
  We remark that the last step is due to the same reason as Lemma \ref{FPnorm Bound}.
\end{proof}

\begin{thm}
  Suppose that the real-valued target function $f\in F_\gamma(\Omega)$, the training dataset $\{x_{i}; y_i\}_{i=1}^{M}$ satisfies $y_i=f(x_{i})$, $i=1,\cdots,M$, and $h_{M}$ is the solution of the regularized model
  \begin{equation}
    \min_{h-h_{\rm ini}\in F_
    \gamma(\Omega)} \norm{h-h_{{\rm ini}}}_{\gamma},\quad\text{s.t.}\quad h(x_i)=y_i,\quad i=1,\cdots,M.\label{eq:optf}
  \end{equation}
  Then we have

  (i) given $\gamma: \Z^{d}\to \R^+$, 
  for any $\delta\in(0,1)$, with probability at least $1-\delta$ over
  the random training sample, the population risk has the bound
  \begin{equation}
    L(h_{M})
    \leq \norm{f-h_{{\rm ini}}}_{\gamma}\norm{\gamma}_{\ell^2}
    \left(\frac{2}{\sqrt{M}}+4\sqrt{\frac{2\log(4/\delta)}{M}}\right).
  \end{equation}

  (ii) given $\gamma: \Z^{d*}\to \R^+$ with $\gamma(0)^{-1}:=0$, for any $\delta\in(0,1)$,
  with probability at least $1-\delta$ over the random training sample,
  the population risk has the bound
  \begin{equation}
    L(h_{M})
    \leq \left(\norm{f-h_{\rm ini}}_{\infty}+2\norm{f-h_{{\rm ini}}}_{\gamma}\norm{\gamma}_{\ell^2}
    \right)
    \left(\frac{2}{\sqrt{M}}+4\sqrt{\frac{2\log(4/\delta)}{M}}\right).
  \end{equation}
\end{thm}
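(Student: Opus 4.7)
The plan is to translate the FP-norm control of the interpolating solution $h_M$ provided by Lemmas \ref{FPnorm Bound} and \ref{0freqconstraint} into a Rademacher-complexity estimate via Lemma \ref{Rad boundMT}, and then apply the standard symmetrization-plus-McDiarmid uniform-convergence chain, exploiting that $h_M$ interpolates the training data so $\tilde{L}_M(h_M)=0$.

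First I would fix an a-priori hypothesis class containing $h_M$. Lemma \ref{FPnorm Bound} delivers $\norm{h_M-h_{\rm ini}}_\gamma\leq\norm{f-h_{\rm ini}}_\gamma=:Q_0$, placing $h_M$ in the FP-norm ball of radius $Q_0$ around $h_{\rm ini}$; plugging $Q=Q_0$ into Lemma \ref{Rad boundMT}(i) yields the Rademacher bound $Q_0\norm{\gamma}_{\ell^2}/\sqrt{M}$ that becomes the $2/\sqrt{M}$ contribution in part (i). In part (ii) the FP-norm leaves the zero mode uncontrolled, so Lemma \ref{0freqconstraint} is invoked to supply $|\widehat{(h_M-h_{\rm ini})}(0)|\leq\norm{f-h_{\rm ini}}_\infty+Q_0\norm{\gamma}_{\ell^2}=:c_0$; Lemma \ref{Rad boundMT}(ii) then applies with this $c_0$, and the $c_0/\sqrt{M}$ piece is precisely the origin of the additional $\norm{f-h_{\rm ini}}_\infty$ contribution appearing in the stated bound.

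Next, since $h_M$ interpolates, $L(h_M)=L(h_M)-\tilde{L}_M(h_M)$ equals the generalization gap. I would bound this gap by the classical route: symmetrization identifies the expected worst-case gap with twice the Rademacher complexity, and McDiarmid's bounded-differences inequality lifts the in-expectation estimate to a high-probability statement. The required $L^\infty$ control on the class elements arises from the Cauchy--Schwarz/Parseval estimate $\norm{h}_\infty\leq\norm{h}_\gamma\norm{\gamma}_{\ell^2}$ (together with the zero-mode bound from Lemma \ref{0freqconstraint} in case (ii)); this $L^\infty$ bound plays a double role, serving both as the Lipschitz constant for Talagrand's contraction when passing from the Rademacher complexity of the class to that of the loss class, and as the bounded-differences constant in McDiarmid. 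A union bound over symmetrization and concentration, each contributing a $\delta/2$ failure budget, produces the $\log(4/\delta)$ in the final constant.

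The main technical care-point I anticipate is keeping the dependence on $Q_0\norm{\gamma}_{\ell^2}$ as in the statement despite the quadratic structure of the MSE loss: organizing the Parseval bound and the Rademacher estimate so that each contributes exactly one factor (via the contraction step) prevents a naive quadratic blow-up. A secondary subtlety is that in part (ii) $\norm{\cdot}_\gamma$ is only a semi-norm, which forces the detour through Lemma \ref{0freqconstraint} to recover control of $\hat{h}(0)$ and yields the additive $\norm{f-h_{\rm ini}}_\infty$ correction.
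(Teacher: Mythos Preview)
Your proposal is correct and follows essentially the same route as the paper: define the data-independent hypothesis class via Lemmas \ref{FPnorm Bound} and \ref{0freqconstraint}, bound its Rademacher complexity by Lemma \ref{Rad boundMT}, control the $L^\infty$ diameter through the Cauchy--Schwarz/Parseval estimate $\norm{h}_\infty\le\norm{h}_\gamma\norm{\gamma}_{\ell^2}$, and finish with $\tilde{L}_M(h_M)=0$. The only cosmetic difference is that the paper packages the symmetrization--McDiarmid--contraction chain into a single black-box inequality cited from \citet{bartlett2002rademacher,shalev2014understanding}, whereas you spell those ingredients out explicitly.
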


\begin{proof}
  Let $f'=f-h_{{\rm ini}}$.

  (i) Given $\gamma:\Z^d\to\R^+$, we set $\mathcal{H}=\{h: \norm{h-h_{{\rm ini}}}_{\gamma}\leq \norm{f'}_{\gamma}\}$.
  According to Lemma \ref{FPnorm Bound}, the solution of problem (\ref{eq:optf}) $h_{M}\in{\cal H}$. By the
  relation between generalization gap and Rademacher complexity (\citet{bartlett2002rademacher,shalev2014understanding}),
  \begin{equation}
  \abs{L(h_{M})-\tilde{L}_M(h_{M})}\leq 2\tilde{R}(\mathcal{H})+2\sup_{h,h'\in\mathcal{H}}\norm{h-h'}_{\infty}\sqrt{\frac{2\log(4/\delta)}{M}}.
  \end{equation}
  One of the component can be bounded as follows\textcolor{blue}{{} }
  \begin{align}
    \sup_{h,h'\in\mathcal{H}}\norm{h-h'}_{\infty} 
    & \leq \sup_{h\in\mathcal{H}}2\norm{h-h_{{\rm ini}}}_{\infty}\\
    & \leq \sup_{h\in\mathcal{H}}2\max_{x}(\abs{\sum_{k\in\Z^{d}}\widehat{\left(h-h_{{\rm ini}}\right)}(k)\E^{2\pi\I k\cdot x}})\\
    & \leq \sup_{h\in\mathcal{H}}2\sum_{k\in\Z^{d}}\abs{\widehat{\left(h-h_{{\rm ini}}\right)}(k)}\\
    & \leq 2\sup_{h\in\mathcal{H}}\left(\sum_{k\in\Z^{d}}(\gamma(k))^2\right)^{\frac{1}{2}}\left(\sum_{k\in\Z^{d}}(\gamma(k))^{-2}\abs{\widehat{\left(h-h_{{\rm ini}}\right)}(k)}^{2}\right)^{\frac{1}{2}}\\
    & \leq 2\norm{f'}_{\gamma}\norm{\gamma}_{\ell^2}.
  \end{align}
  By Lemma \ref{Rad bound},
  \begin{equation}
    \tilde{R}(\mathcal{H})\leq \frac{1}{\sqrt{M}}\norm{f'}_{\gamma}\norm{\gamma}_{\ell^2}.
  \end{equation}
  By optimization problem (\ref{eq:optf}), $\tilde{L}(h_{M})\leq\tilde{L}(f')=0$.
  Therefore we obtain 
  \begin{equation}
    L(h)\leq \frac{2}{\sqrt{M}}\norm{f'}_{\gamma}\norm{\gamma}_{\ell^2}+4\norm{f'}_{\gamma}\norm{\gamma}_{\ell^2}\sqrt{\frac{2\log(4/\delta)}{M}}.
  \end{equation}

  (ii) Given $\gamma: \Z^{d*}\to \R^+$ with $\gamma(0)^{-1}:=0$, by Lemma \ref{Rad bound},
  \ref{FPnorm Bound}, and \ref{0freqconstraint}, define $\mathcal{H}'=\{h:\norm{h-h_{{\rm ini}}}_{\gamma}\leq \norm{f'}_{\gamma},\abs{\widehat{\left(h-h_{{\rm ini}}\right)}(0)}\leq \norm{f'}_{\infty}+\norm{f'}_{\gamma}\norm{\gamma}_{\ell^2}\}$,
  we obtain 
  \begin{equation}
    \tilde{R}(\mathcal{H}')
    \leq \frac{1}{\sqrt{M}}\norm{f'}_{\infty}+\frac{2}{\sqrt{M}}\norm{f'}_{\gamma}\norm{\gamma}_{\ell^2}.
  \end{equation}
  Also 
  \begin{align}
    \sup_{h,h'\in\mathcal{H}'}\norm{h-h'}_{\infty} & \leq \sup_{h\in\mathcal{H}}2\sum_{k\in\Z^{d}}\abs{\widehat{\left(h-h_{{\rm ini}}\right)}(k)}\\
    & \leq 2\sup_{h\in\mathcal{H}}\left[\abs{\widehat{\left(h-h_{{\rm ini}}\right)}(0)}+\left(\sum_{k\in\Z^{d*}}(\gamma(k))^2\right)^{\frac{1}{2}}\left(\sum_{k\in\Z^{d*}}(\gamma(k))^{-2}\abs{\widehat{\left(h-h_{{\rm ini}}\right)}(k)}^{2}\right)^{\frac{1}{2}}\right]\\
    & \leq 2\norm{f'}_{\infty}+4\norm{f'}_{\gamma}\norm{\gamma}_{\ell^2}.
  \end{align}
  Then 
  \begin{equation}
    L(h_{M})\leq \frac{2}{\sqrt{M}}\norm{f'}_{\infty}+\frac{4}{\sqrt{M}}\norm{f'}_{\gamma}\norm{\gamma}_{\ell^2}+\left(4\norm{f'}_{\infty}+8\norm{f'}_{\gamma}\norm{\gamma}_{\ell^2}\right)\sqrt{\frac{2\log(4/\delta)}{M}}.
  \end{equation}
\end{proof}
\begin{rem}
  By the assumption in the theorem, the target function $f$ belongs to $F_\gamma(\Omega)$ which is a subspace of $L^2(\Omega)$. In most applications, $f$ is also a continuous function. In any case, $f$ can be well-approximated by a large neural network due to universal approximation theory \citet{cybenko1989approximation}.
\end{rem}

\section{Numerical solution of the LFP model\label{sec:Numerical-solution-of}}

Numerically, we solve the following ridge regression problem (\citet{mei2019mean})
to approximate the solution of the optimization problem \eqref{eq: minFPnorm}
\begin{equation}
\min_{h}\sum_{i=1}^{M}\left(h(x_{i})-y_{i}\right)^{2}+\varepsilon\sum_{\xi\in \mathbb{L}^{d}}\left(\frac{\frac{1}{N}\sum_{i=1}^{N}\left(|r_{i}(0)|^{2}+w_{i}(0)^{2}\right)}{|\xi|^{d+3}}+\frac{4\pi^{2}\frac{1}{N}\sum_{i=1}^{N}\left(|r_{i}(0)|^{2}w_{i}(0)^{2}\right)}{|\xi|^{d+1}}\right)^{-1}|\hat{h}(\xi)|^{2},\label{eq:regopt}
\end{equation}
where the truncated lattice $\mathbb{L}^{d}=\frac{1}{L^{'}}\left(\Z^d\cap[-K+1,K-1]^{d}\right)$. $\varepsilon$
is fixed to $10^{-6}$. The numerical error contributed by $\epsilon$
is very small in a proper range. For the case of $d=1$ as shown in
Fig. \ref{fig:2relu}, we choose $L'=20$, $K=2000$ for the computation
of the solution of problem \eqref{eq:regopt}. For the case of $d=2$
as shown in Figs. \ref{fig:2relu-1} and \ref{fig:2relu-1-1}, we
choose $L'=24$, $K=120$ for the computation of solution. For higher
dimensional cases, because the size of set $\mathbb{L}^{d}$ grows exponentially
with $d$, we will encounter curse of dimensionality for solving
Eq. \eqref{eq:regopt}. Therefore, we do not test the LFP model for $d>2$ in this
work.
\begin{center}
\begin{figure}
\begin{centering}
\subfloat[]{\begin{centering}
\includegraphics[scale=0.2]{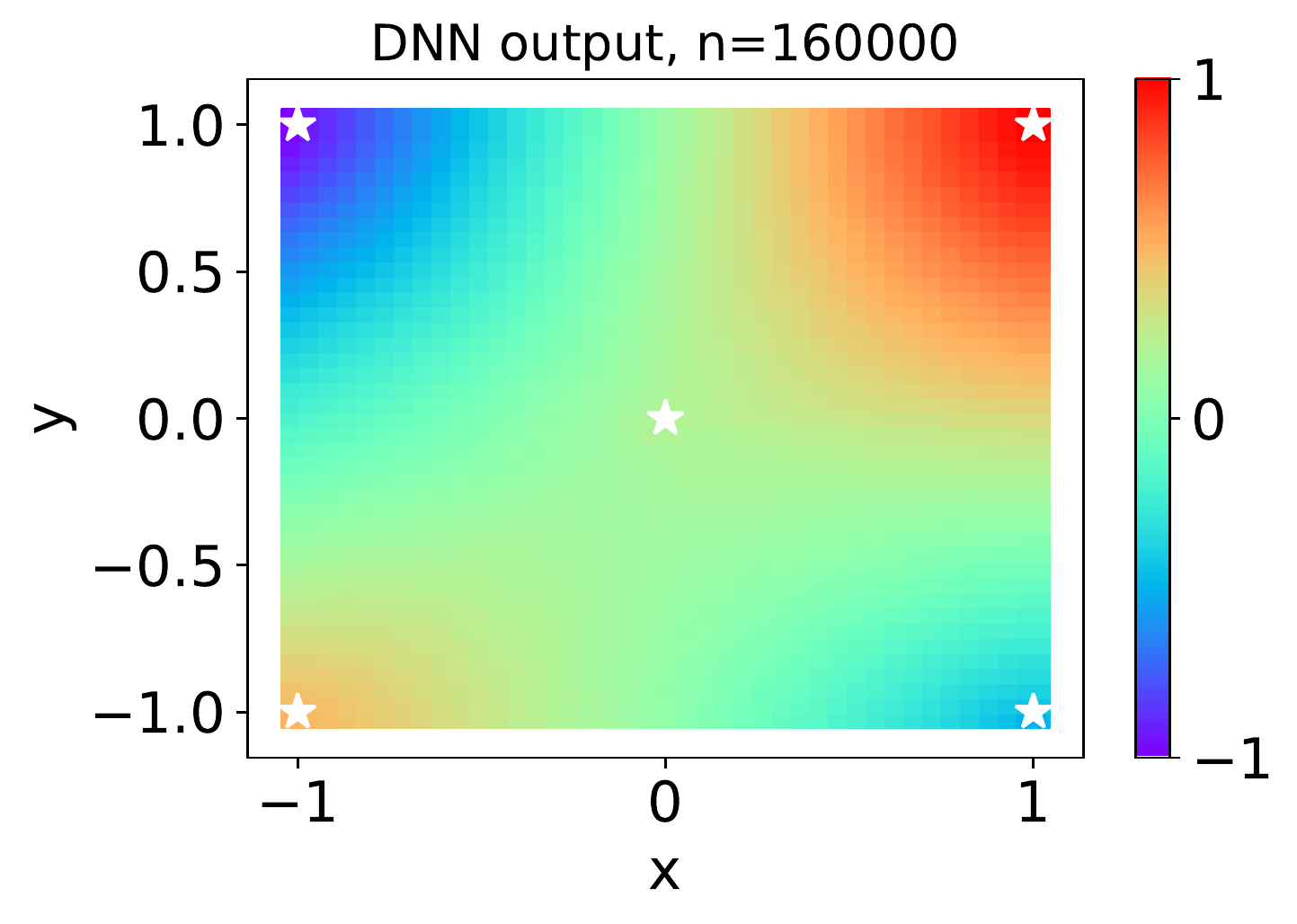} 
\par\end{centering}
}\subfloat[]{\begin{centering}
\includegraphics[scale=0.2]{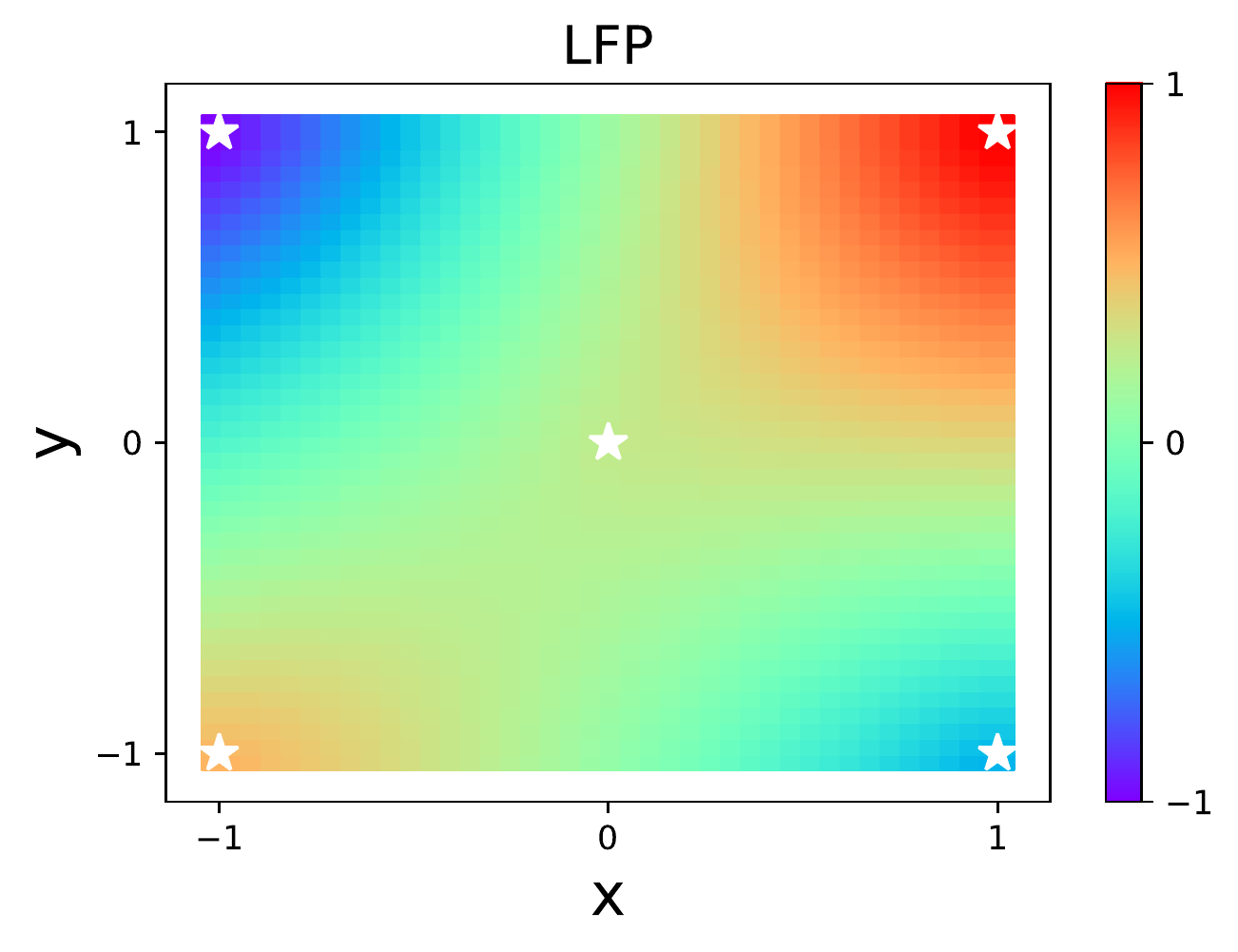} 
\par\end{centering}
}\subfloat[]{\begin{centering}
\includegraphics[scale=0.2]{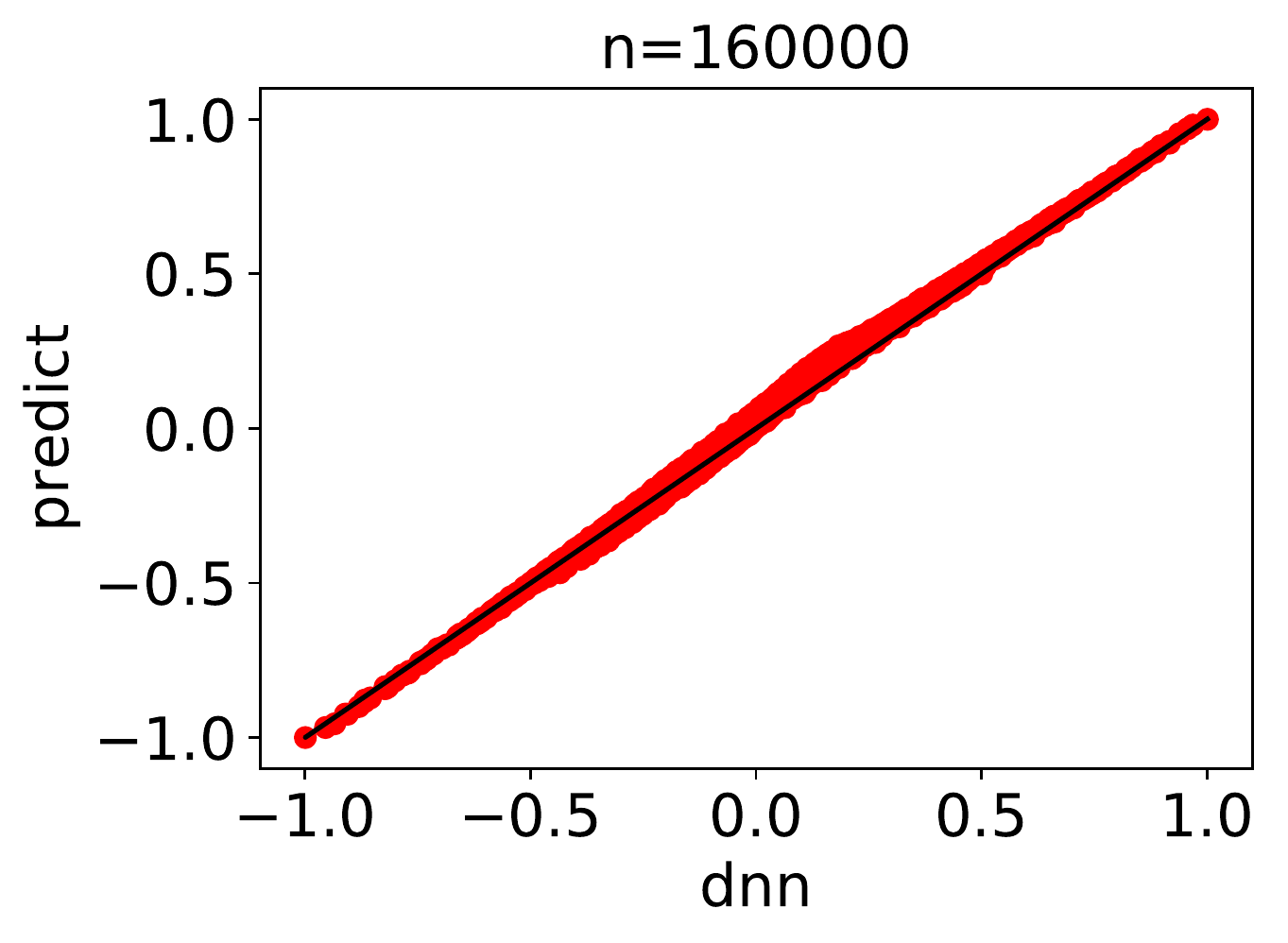} 
\par\end{centering}
}
\par\end{centering}
\caption{Same as Fig. \ref{fig:2relu-1} (a,b,c) except that the training data
consists of $5$ points and is asymmetrical. \label{fig:2relu-1-1} }
\end{figure}
\par\end{center}

\end{document}